\documentclass[letterpaper,10pt,english]{IEEEtran}
\pdfminorversion=4
\usepackage{amsfonts}       
\usepackage{hyperref}
\usepackage{algorithm}
\usepackage{algorithmicx}
\usepackage{amssymb}
\usepackage{amsmath}
\usepackage{graphicx,wrapfig,times,amsthm,bm}
\usepackage{subfig}
\usepackage{threeparttable}
\usepackage[wide]{sidecap}
\usepackage{array}
\usepackage{algorithm}
\usepackage{algpseudocode}
\usepackage{lipsum,babel}
\usepackage{color}
\usepackage{setspace}

\newtheorem{thm}{Theorem}
\newtheorem{lem}{Lemma}
\newtheorem{defn}{Definition}

\newtheorem{assum}{Assumption}

\newtheorem{rem}{Remark}
\newtheorem{prob}{Problem}

\def \R {\mathbb{R}}

\def \tailcoeff {\gamma}
\def \tailexp {\alpha}

\def \ssconstant {C}

\def \policy {\boldsymbol{\pi}}

\def \optimalpolicy {\boldsymbol{\pi^\star}}
\def \para {\boldsymbol{\theta}}
\def \rrate {\beta}

\def \randommatrix {\boldsymbol{\phi}}
\def \bigrandommat {\boldsymbol{\psi}}

\newcommand{\Mnorm}[2]{{\left\vert\kern-0.30ex\left\vert\kern-0.30ex\left\vert #1 
		\right\vert\kern-0.30ex\right\vert\kern-0.30ex\right\vert}}
\newcommand{\Opnorm}[3]{{\left\vert\kern-0.25ex\left\vert\kern-0.25ex\left\vert #1 
		\right\vert\kern-0.25ex\right\vert\kern-0.25ex\right\vert}_{#2 \to #3}}
\newcommand{\norm}[2]{{\left\vert\kern-0.30ex\left\vert #1 
		\right\vert\kern-0.30ex\right\vert}}
\newcommand{\eigmax}[1]{\left| \boldsymbol{\lambda}_{\max} \left( #1 \right)\right|}
\newcommand{\eigmin}[1]{\left| \boldsymbol{\lambda}_{\min} \left( #1 \right)\right|}
\newcommand{\tr}[1]{\boldsymbol{\mathrm{tr}} \left( #1 \right)}
\newcommand{\PP}[1]{\mathbb{P} \left(#1\right)}
\newcommand{\E}[1]{\mathbb{E} \left[#1\right]}

\newcommand{\innerproductminconstant}[1]{\psi_0}

\newcommand{\regret}[2]{\mathcal{R}_{#1} \left(#2\right)}

\newcommand{\Kmatrix}[1]{P\left(#1\right)}
\newcommand{\Lmatrix}[1]{G\left(#1\right)}
\newcommand{\extendedLmatrix}[1]{F\left(#1\right)}
\newcommand{\extendedL}[1]{F_{#1}}

\newcommand{\instantcost}[2]{c_{#1}\left(#2\right)}

\newcommand{\estpara}[1]{\widetilde{\boldsymbol{\theta}}_{#1}}

\newcommand{\levelset}[1]{\mathcal{S}\left({#1}\right)}

\newcommand{\riccatiOp}[2]{\Phi_{#1}\left(#2\right)}
\newcommand{\covmat}[1]{\Sigma_{#1}}
\newcommand{\empiricalcovmat}[1]{\widehat{\Sigma}_{#1}}

\newcommand{\boot}[1]{\widehat{#1}}
\newcommand{\Pboot}[1]{\widehat{\mathbb{P}}_{#1}}
\newcommand{\Eboot}[1]{\widehat{\mathbb{E}}_{#1}}
\newcommand{\dirac}[2]{\boldsymbol{\delta} {\left[#1\right]}}
\newcommand{\history}[1]{\mathbb{H}_{#1}}
\newcommand{\model}[1]{\mathcal{#1}}

\newcommand{\filter}[1]{\left\{\mathcal{F}_{#1}\right\}_{{#1}=0}^\infty}
\newcommand{\noise}[1]{\xi\left(#1\right)}
\newcommand{\state}[1]{x\left(#1\right)}
\newcommand{\action}[1]{u\left(#1\right)}
\newcommand{\bootstate}[1]{\widehat{x}\left(#1\right)}
\newcommand{\residunoise}[1]{\zeta\left(#1\right)}
\newcommand{\bootnoise}[1]{\widehat{\xi}\left(#1\right)}
\newcommand{\avenoise}[1]{\overline{\zeta}_{#1}}

\newif\ifarxiv
\arxivtrue

\begin{document}
\ifarxiv
\doublespacing
\onecolumn
\else \fi
\title{On Applications of Bootstrap in Continuous Space Reinforcement Learning}
\author{Mohamad~Kazem~Shirani~Faradonbeh,
	Ambuj~Tewari,
	and~George~Michailidis
	\thanks{\ifarxiv \else 
		M.K. Shirani Faradonbeh and G. Michailidis are with the Department of Statistics and the Informatics Institute, University of Florida, Gainesville, FL, 32611-5585 USA (e-mail: mfaradonbeh@ufl.edu, gmichail@ufl.edu)
		
		Ambuj Tewari is with the Department of Statistics and the Department of Electrical Engineering and Computer Science (by courtesy), University of Michigan, Ann Arbor, MI  48109-1107 USA (e-mail: tewaria@umich.edu)\fi}}
\maketitle

\begin{abstract}
	In decision making problems for continuous state and action spaces, linear dynamical models are widely employed. Specifically, policies for stochastic linear systems subject to quadratic cost functions capture a large number of applications in reinforcement learning. Selected randomized policies have been studied in the literature recently that address the trade-off between identification and control. However, little is known about policies based on bootstrapping observed states and actions. In this work, we show that bootstrap-based policies achieve a square root scaling of regret with respect to time. We also obtain results on the accuracy of learning the model's dynamics. Corroborative numerical analysis that illustrates the technical results is also provided.
\end{abstract}
\begin{IEEEkeywords}
	Residual Bootstrap; Randomized Policies; Regret Analysis; Continuous State-Space; Identification for Control; Sequential Decision-making under Uncertainty.
\end{IEEEkeywords}

\section{Introduction} \label{intro}
In the theory of reinforcement learning, efficient algorithms with provable theoretical guarantees are established for two canonical settings. The first is \emph{finite state} Markov decision processes (MDPs) with state spaces of small cardinalities \cite{li2012sample}. The second is the continuous space setting of \emph{linear quadratic (LQ)} systems \cite{dorato1995linear}. In the latter one, the control action and the state both are multidimensional real vectors, and the state evolves according to stochastic linear dynamics determined by the control action. Further, the cost (or negative reward) has a quadratic form in both the state and the control input. Besides being theoretically amenable, LQ models capture a wide range of applications from air conditioning control \cite{lazic2018data} to portfolio optimization \cite{abeille2016lqg}. LQ models also arise when studying the behavior of nonlinear systems around the working equilibrium~\cite{li2004iterative,kappen2005linear}

In applications where the true system model is not known, data-driven strategies are required for decision making under uncertainty \cite{lai1986asymptotically}. Then, the learning algorithm has to select actions amongst infinitely many options in order to steer the system toward minimizing the costs incurred. Note that, unlike the finite state MDP case, in LQ systems there is a possible danger of the state vector becoming unbounded \cite{lai1985asymptotic,faradonbeh2018finite,sarkar2018fast}. Therefore, the design and analysis of reinforcement learning algorithms for LQ systems involve significantly different conceptual and technical issues to balance exploration (identification) and exploitation (control). For this purpose, one might consider to use upper-confidence bound (UCB) approaches~\cite{campi1998adaptive,bittanti2006adaptive,abbasi2011regret,faradonbeh2017finite} that rely on the optimism in the face of uncertainty~(OFU) principle. The UCB approach was historically first developed for finite action bandit problems \cite{lai1985asymptotically}. While being efficient in the finite action setting, UCB-based approaches have been found to be computationally intractable in more general problems \cite{faradonbeh2018input}. 

Recently, various methods for reinforcement learning have been proposed that leverage \emph{randomization} strategies to guide the learning process. Randomized policy search methods have been studied both empirically, as well as theoretically (see e.g. \cite{mania2018simple,malik2018derivative}). For the problem of stabilizing an unknown LQ system, an algorithm leveraging random feedback gains is proposed~\cite{faradonbeh2018stabilization}. There is also work showing the efficiency of achieving the exploration-exploitation trade-off by randomizing the learned model through both posterior sampling~\cite{abeille2018improved} and additive randomization~\cite{faradonbeh2018optimality}. Finally, finite time analysis of Certainty Equivalent policies utilizing input perturbation has led to performance guarantees for both learning~\cite{dean2018safely} and planning~\cite{faradonbeh2018input}. 

In this paper, we study randomized algorithms that leverage the \emph{statistical bootstrap}~\cite{efron1979bootstrap} for reinforcement learning in LQ systems. Bootstrap-based exploration has been analyzed in simpler settings, such as bandit problems~\cite{kveton2018garbage,vaswani2018new}. There has been a lot of interest recently in using bootstrap-based exploration strategies especially along with deep neural networks~\cite{eckles2014thompson,osband2015bootstrapped,osband2016deep,rajagopal2017neural}. However, results on bootstrap-based reinforcement learning algorithms for LQ models have been limited to primarily numerical analyses for learning the model-misspecification error \cite{dean2017sample}, while {\em rigorous performance guarantees} are not currently available.

Further, bootstrap methods are also of practical interest because of their {\em robustness} to misspecified models. The amount of exploration in bootstrap-based adaptive control policies is endogenously determined by the history of the system to date. Therefore, the policy adapts its decision-making strategy with possible systematic and/or latent ``biases" occurring due to lack of accurate information regarding the system's dynamics\footnote{see the discussion at the end of Section \ref{analysis} for more details}. Examples of such ``biases" include structural breaks~\cite{pesaran2005small}, system resets~\cite{dean2017sample}, and misspecification of the model dimension~\cite{byrnes1994nonlinear,guo1996global,todorov2005generalized}. 

The focus of this work is on the performance of reinforcement learning policies that use the \emph{residual bootstrap} to balance exploration and exploitation. We show that model-based strategies that use linear regression for learning the model and bootstrapping for policy design, provide a regret that scales as the square root of the total time of interacting with the system. Further, the accuracy of learning the unknown dynamics parameter will be specified. To establish the results, we carefully examine the effect of different converging and diverging quantities involved in the problem, such as the errors in learning the model, the distributions induced by the regression residuals, the correlation within and between the observed input-state signals, and the ongoing learning-planning interactions, that are of independent interest. At the technical level, we leverage results in the literature on the bootstrap~\cite{hall2013bootstrap}, martingale central limit~\cite{brown1971martingale,mcleish1974dependent} and convergence theorems~\cite{hall2014martingale}.

The remainder of the paper is organized as follows: Section \ref{model} introduces the mathematical model under consideration and discusses the rigorous formulation of the problem, and also provides some necessary preliminaries. Section \ref{algos} describes the bootstrap procedure and the resulting reinforcement learning algorithm to design the policy. Subsequently, the main result on the performance of the proposed algorithm is presented, together with numerical work showcasing the performance of the algorithm, in Section \ref{analysis}. 

\paragraph{Notation} 
The following notation will be employed throughout the paper. $A'$ is the transpose of matrix or vector $A$. The largest and the smallest eigenvalues of square Hermitian matrix $A$ are denoted by $\lambda_{\max} (A)$ and $\lambda_{\min}(A)$, respectively. If $A$ is not Hermitian, the ordering of the eigenvalues is determined by their magnitudes. 
The norm of the $d$ dimensional vector $v$ is denoted by $\norm{v}{} = \left( \sum\limits_{i=1}^{d} \left| v_i \right|^2 \right)^{1/2}$, and $\Mnorm{\cdot}{}$ is used for the operator norm of matrices: $\Mnorm{A}{2} = \sup \limits_{\norm{v}{2}=1} \norm{Av}{2}$.
For atomic measures on Euclidean spaces we use Dirac function $\dirac{v}{\cdot}$; i.e. it denotes a unit point mass at $v \in \R^d$. Finally, the letters $\policy$ and $\para$ (or $\estpara{},\boot{\para}$) are being used for generic reinforcement learning policies and model parameters, respectively, and will be rigorously defined later on. 

\section{Setting and Problem Formulation} \label{model}
The model, denoted by $\model{M}$, consists of multidimensional state and control vectors, parameters that specify its dynamical evolution over time and cost matrices, as defined next. The $p$ dimensional state process $\left\{ \state{t} \right\}_{t=0}^\infty$ evolves according to an {\em unknown} stochastic linear dynamic equation governed by the $r$ dimensional control action $\action{t}$, and the random disturbance (or noise) process~$\left\{ \noise{t} \right\}_{t=1}^\infty$:
\begin{eqnarray}
\state{t+1} &=& A_0\state{t}+B_0\action{t}+ \noise{t+1}. \label{systemeq1}
\end{eqnarray}
That is, the current state $\state{t}$ and the input $\action{t}$ determine the next state $\state{t+1}$ through the state transition matrix $A_0 \in \R^{p \times p}$, and the input influence matrix $B_0 \in \R^{p \times r}$, respectively. 
\begin{defn}
	Henceforth, we will denote the true parameter tuple $\left[A_0,B_0\right] \in \R^{p \times (p+r)}$ by the $p \times q$ dynamics matrix $\para_0$, with $q \equiv p+r$. Similarly, we use the parameter $\para \in \R^{p \times q}$ to denote generic dynamics matrices.
\end{defn}

The additive noise in the stochastic dynamics \eqref{systemeq1} satisfies $\E{\noise{t}}=0$. For the sake of simplicity, we assume that the sequence of noise vectors are independent, and have a stationary covariance structure: $\E{\noise{t}\noise{t}'}=\covmat{}$. Further, $\covmat{}$ is assumed to be positive definite, and $\sup\limits_{t \geq 1} \E{\norm{\noise{t}}{2}^{\tailexp}}<\infty$, for some fixed $\tailexp>4$. As a matter of fact, extensions to more general technical settings such as non-stationary~\cite{faradonbeh2018input} or singular covariance matrices (assuming reachability \cite{faradonbeh2018finite}), as well as conditionally independent processes~\cite{abbasi2011regret}, can be accommodated in a similar manner. Note though that the assumed noise
process is not necessarily stationary in the strict sense.

We are interested in finding reinforcement learning policies to minimize the long-term average cost as formally defined next. First, suppose that $Q_x$ and $Q_u$ are the regulation weight matrices reflecting the effect of the state and the input vectors in the cost function, respectively. Specifically, letting $\policy$ be the decision making law (policy) determining the control input $\action{t}$ at every time $t$, define the quadratic instantaneous cost of $\policy$ according to
\begin{eqnarray}
\instantcost{t}{\policy}&=& \norm{Q_x^{1/2}\state{t}}{2}^2 + \norm{Q_u^{1/2}\action{t}}{2}^2, \label{systemeq2}
\end{eqnarray}
where $Q_x \in \R^{p \times p}$, and $Q_u \in \R^{r \times r}$ are symmetric positive definite matrices. Thus, \eqref{systemeq2} reflects the desire to regulate the state of the system through control actions of small magnitude.

When the dynamics follow \eqref{systemeq1}, and the instantaneous cost is given by \eqref{systemeq2}, we denote the model by~$\model{M}\left(\para_0\right)= \left( \para_0, Q_x,Q_u\right)$. Further, the history of the system at time $t$, denoted as $\history{t}$, consists of the sequence of the control inputs applied so far, and the resulting state vectors:
\begin{equation*}
\history{t}=\left( \state{0}, \cdots, \state{t}, \action{0}, \cdots, \action{t-1} \right).
\end{equation*}
A reinforcement learning policy observes the history $\history{t}$ at time $t$ aiming to control the cost incurred.
That is, the policy $\policy$ is a (possibly random) mapping which designs the input sequence~$\left\{\action{t}\right\}_{t=0}^\infty$ according to the history available up to that time;
\begin{equation} \label{systemeq3}
\action{t} = \policy \left( \history{t},Q_x,Q_u \right),
\end{equation}
so that the average cost is minimized. Thus, the objective is summarized in the following regulation problem:
\begin{prob} \label{exploitP}
	Find $\policy$ to minimize the average cost below, subject to \eqref{systemeq1}, \eqref{systemeq2}, and \eqref{systemeq3};
	\begin{equation} \label{EAcostdeff}
		\limsup \limits_{n \to \infty} {\frac{1}{n} \sum\limits_{t=0}^{n-1} \instantcost{t}{\policy}}.
	\end{equation}
\end{prob} 
Importantly, according to \eqref{systemeq3} the true dynamics parameter $\para_0$ in \eqref{systemeq1} is unknown. Therefore, the policy must also employ an exploration procedure to accurately learn the model parameters, thus addressing the following identification problem:
\begin{prob} \label{exploreP}
	Using \eqref{systemeq1} and \eqref{systemeq3}, design $\policy$ to learn $\para_0$, as accurately as possible. 
\end{prob}
Note that in the above formulation, the true system dynamics are unknown, while the cost matrices are known. It gives rise to  a realistic setting, since the decision making algorithm does not know the actual evolution of the underlying system (i.e. $\para_0$), but is aware of the criteria according to which a policy to achieve the goal is being assessed (i.e. $Q_x,Q_u$).

Subsequently, we define the {\em regret} of a policy, which is the amount of sub-optimality it incurs due to uncertainty about the parameters of the model \eqref{systemeq1}. To do so, we need to introduce an optimal policy $\optimalpolicy$ that minimizes the average cost, given full knowledge of the system model $\model{M}\left(\para_0\right)$. Then, $\optimalpolicy$ will be the baseline for assessing the exploitation performance of the arbitrary reinforcement learning policy $\policy$. It is well known that in order to find $\optimalpolicy$, an algebraic Riccati equation needs to be solved~\cite{bertsekas1995dynamic,kumar2015stochastic}. 

To proceed, we introduce some additional notation. First, for an arbitrary $\para=\left[ A,B \right]$ define the matrix valued mapping 
\begin{eqnarray*} \label{ricatti2}
	\riccatiOp{\para}{P} = Q_x + A'PA - A' P B \left(B'PB+Q_u\right)^{-1} B'PA.
\end{eqnarray*}
Both the domain and the range of $\riccatiOp{\para}{\cdot}$ are the set of $p \times p$ matrices. Next, if there is a positive semidefinite matrix $\Kmatrix{\para}$ satisfying the algebraic Riccati equation $\Kmatrix{\para}=\riccatiOp{\para}{\Kmatrix{\para}}$, let the feedback gain matrix $\Lmatrix{\para}$ be
\begin{eqnarray} \label{ricatti1}
\Lmatrix{\para} = -\left(B'\Kmatrix{\para}B+ Q_u \right)^{-1} B'\Kmatrix{\para}A.
\end{eqnarray}
Furthermore, for $\para_0=[A_0,B_0]$ in \eqref{systemeq1}, define the linear time-invariant (LTI) policy 
\begin{eqnarray} \label{optpolicydeff}
\optimalpolicy: \:\:\: \action{t}= \Lmatrix{\para_0} \state{t}, \:\:\: t=0,1,2,\cdots.
\end{eqnarray}
Finally, using $\optimalpolicy$, the regret of $\policy$ is naturally defined by: 
\begin{eqnarray} \label{regretdeff}
\regret{n}{\policy} = \sum \limits_{t=0}^{n-1} \left[ \instantcost{t}{\policy} - \instantcost{t}{\optimalpolicy} \right].
\end{eqnarray}

It remains to specify settings for which $\optimalpolicy$ is well-defined. To that end, the following closed-loop stabilizability condition for the model \eqref{systemeq1} is necessary and sufficient \cite{bertsekas1995dynamic,kumar2015stochastic}.
\begin{assum} \label{stabilizability}
	There is a LTI feedback gain $\action{t}=G_s\state{t}$, such that $G_s \in \R^{r \times p}$ satisfies $\eigmax{A_0+B_0G_s} < 1$. 
\end{assum} 
Note that in general, the stabilizing gain $G_s$ mentioned above is only required to exist, and does not need to be known to the decision maker. In other words, to verify that the stabilizability Assumption \ref{stabilizability} holds, it suffices to show that a hypothetical omniscient decision maker (who knows the true model $\model{M}\left(\para_0\right)$) possessing an omnipotent computational power is able to stabilize the system. However, we briefly outline the available constructive methods to compute $G_s$ (as well as $\optimalpolicy$). It is shown (see for example \cite{bertsekas1995dynamic,kumar2015stochastic,faradonbeh2018stabilization}) that under Assumption \ref{stabilizability} the following statements hold;
\begin{enumerate}
	\item 
	The positive definite matrix $\Kmatrix{\para_0}$ uniquely exists. So, both the feedback $\Lmatrix{\para_0}$ and the optimal policy $\optimalpolicy$ are well defined.
	\item 
	Letting $P_0$ be an arbitrary positive semidefinite $p \times p$ matrix, the recursive formula $P_{k+1}=\riccatiOp{\para_0}{P_k}$ converges exponentially fast to $\Kmatrix{\para_0}$ as $k$ grows.
	\item 
	The feedback matrix $\Lmatrix{\para_0}$ stabilizes the system: 
	$$\eigmax{A_0+B_0 \Lmatrix{\para_0}}<1.$$
	\item 
	The minimum of the average cost \eqref{EAcostdeff} is achieved by $\optimalpolicy$.
	\item 
	In the class of LTI policies (i.e. of the form $\action{t}=G\state{t}$), the policy $\optimalpolicy$ is the only optimal one. 
\end{enumerate}

In the remainder of the paper, we employ reinforcement learning algorithms to address Problem \ref{exploitP}, studying the growth rates of $\regret{n}{\policy}$. Similarly, letting $\estpara{n}$ be the learned/estimated parameter at time $n$ (the sample size is $n$ as well), we consider the exploration performance in Problem \ref{exploreP} through the rates of the learning error $\Mnorm{\estpara{n}-\para_0}{}$. Bootstrap is the cornerstone of the proposed algorithms to efficiently randomize the design of the control inputs, and address the trade-off between the learning accuracy and the regret. 

\section{Algorithms} \label{algos}
An algorithm needs to address the common dilemma of decision making under uncertainty, as follows. First, if the algorithm makes decisions naively according to the estimated (learned) dynamics parameter, it will presumably fail to provide a small regret. Intuitively, the state $\state{t}$ and the action $\action{t}$ are required to be highly correlated in order to remain close to the optimal strategy $\optimalpolicy$ in \eqref{optpolicydeff}. Because of this correlation,  history $\history{t}$ may fail to accurately learn $\para_0$, which can lead to drastically large regret values. Technically, if $\action{t}=G\state{t}$ for some $r \times p$ feedback gain matrix $G$, then the dimension of the observed history is effectively $p$, while the rows of the matrices $\para$ in the parameter space belong to $\R^q$. Therefore, learning can be dramatically misleading. This phenomenon of {\em failing to falsify} the imprecise approximations of the true model is extensively discussed in the adaptive control literature \cite{becker1985adaptive,lai1986extended,bittanti2006adaptive,faradonbeh2018optimality}. 

In other words, if the policy fails to sufficiently explore the parameter space, an \emph{inaccurate} approximation $\estpara{t}$ can falsely be treated as an accurate one. This necessitates an efficient exploration strategy to decrease the aforementioned correlation between the state and the action. Moreover, the above argument reveals the reasoning leading to UCB approaches~\cite{abbasi2011regret,faradonbeh2017finite}, or statistically independent dither schemes~\cite{faradonbeh2018input,dean2018safely}, as useful prescriptions to overcome the exploration-exploitation dilemma.

In order to explore, the decision maker needs to deviate from the learned model $\estpara{t}$ prior to using $\model{M}\left( \estpara{t}\right)$ to design the reinforcement learning policy. On the other hand though, the above deviations must be sufficiently small in order to avoid significant deterioration in the exploitation performance (i.e. increase in the regret). The solution we discuss here is to utilize the bootstrap to provide the necessary balance between these two competing objectives.

To this end, the policy $\policy$ applies the supposedly optimal control action treating $\model{M}\left(\boot{\para}_t\right)$ as the true model, where $\boot{\para}_t$ is provided by the bootstrap algorithm. It computes the regression residuals for the learned parameter $\estpara{t}$, and bootstraps (i.e. resamples) them to reconstruct a {\em surrogate} system. Then, the history of the surrogate system will be the data being used to compute~$\boot{\para}_t$. In the first subsection, we explain the least squares estimator for learning the model parameter, as well as the above residual bootstrap procedure.

Subsequently, in the second subsection we present an episodic algorithm which updates the model-based policy at the end of every episode, while the lengths of the episodes grow exponentially fast. Therefore, as the duration of the interaction with the system grows, the number of policy updates scales logarithmically. As a matter of fact, this leads to a significant reduction in the computation of the reinforcement learning policy, by avoiding unnecessary updates before collecting sufficient data, due to the fact that the solution of the algebraic Riccati equation \eqref{ricatti1} for a hypothetical model is not instantly available. The latter would impose a substantial computational burden, especially for systems whose dimension is fairly large.

\subsection{Residual Bootstrap}
According to the linear dynamical model in \eqref{systemeq1}, a natural procedure to learn $\para_0$ through the control input $\action{t}$ and the observed states $\state{t},\state{t+1}$ is based on least squares. In the sequel, we discuss the residual bootstrap method for the least squares learning procedure. Further, we will present the corresponding algorithm which will be used as a subroutine in the reinforcement learning algorithm in the next subsection. 

Recall that the LTI policy $\optimalpolicy$ in \eqref{optpolicydeff} is optimal. Thus, a natural form of the adaptive policies that a reinforcement learning algorithm is expected to provide through planing according to the learned model, is $\action{t}=G_t\state{t}$. Assuming so for $t<n$, now the algorithm needs to decide about the action at time $n$. Thus, plugging $\action{t}=G_t\state{t}$ in the dynamical model \eqref{systemeq1}, and denoting
\begin{equation*}
\extendedL{t}=\left[ I_p, G_t' \right]' \in \R^{q \times p},
\end{equation*}
we get the so-called closed-loop evolution of the system by the (possibly time-varying) autoregressive dynamics
\begin{equation*}
\state{t+1}= \para_0 \extendedL{t} \state{t} + \noise{t+1},
\end{equation*}
for $0 \leq t <n$. Then, Algorithm \ref{bootalgo} returns the bootstrapped parameter $\boot{\para}_n$ based on the matrices $\left\{ \extendedL{t} \right\}_{t=0}^{n-1}$, as well as the available state observations $\left\{ \state{t} \right\}_{t=0}^n$. The details are provided below.

First, based on the collected history $\left\{ \extendedL{t} \right\}_{t=0}^{n-1}, \left\{ \state{t} \right\}_{t=0}^n$, define the following least square estimate of $\para_0$:
\begin{eqnarray} \label{LSE}
	\estpara{n} = \arg\min\limits_{\para \in \R^{p \times q}} \sum\limits_{t=0}^{n-1} \norm{\state{t+1}- \para \extendedL{t} \state{t}}{}^2.
\end{eqnarray}
The learning procedure \eqref{LSE} treats the noise vectors $\noise{t}$ as the errors of a linear regression procedure, based on the dynamical model \eqref{systemeq1}. Therefore, the residuals of the least squares estimate are defined by the difference between the observed response $\state{t+1}$, and the fitted response $\estpara{n} \extendedL{t} \state{t}$. That is,
\begin{equation} \label{residuals}
	\residunoise{t+1}= \state{t+1}- \estpara{n} \extendedL{t} \state{t},
\end{equation}
for $0 \leq t <n$. The residuals $\left\{ \residunoise{t} \right\}_{t=1}^n$ can conceptually be considered as approximations of the actual regression errors $\left\{\noise{t}\right\}_{t=1}^n$. Using the residuals $\left\{ \residunoise{t} \right\}_{t=1}^n$, we define the centered empirical distribution 
\begin{equation} \label{empiricaldist} 
	\Pboot{n}= \frac{1}{n} \sum\limits_{t=1}^n \dirac{\residunoise{t}-\avenoise{n}}{\cdot},
\end{equation}
where $\avenoise{n}$, the average of the residuals given by
\begin{equation} \label{residucenter}
	\avenoise{n}= \frac{1}{n} \sum\limits_{t=1}^n \residunoise{t},
\end{equation}
is being used for centering the empirical distribution. In fact, $\Pboot{n}$ is the sample probability measure for the population distribution of the noise process $\left\{ \noise{t} \right\}_{t=1}^\infty$. Note that $\Pboot{n}$ is defined on $\R^p$. We then use $\estpara{n}$ and $\Pboot{n}$ to generate the surrogate state vectors $\left\{ \bootstate{t} \right\}_{t=0}^n$ by the dynamical model
\begin{equation*}
\bootstate{t+1}= \estpara{n} \extendedL{t}  \bootstate{t} + \bootnoise{t+1},
\end{equation*} 
where the bootstrap noise vectors $\bootnoise{t+1}$ are drawn independently from $\Pboot{n}$. Hence, letting $\Eboot{n}$ be the expectation with respect to $\Pboot{n}$, clearly we have $\Eboot{n}\left[ \bootnoise{t} \right]=0$. Also note that the actual dynamics parameter for the surrogate system $\left\{ \bootstate{t} \right\}_{t=0}^n$ is the learned parameter $\estpara{n}$ defined in \eqref{LSE}. Finally, the algorithm applies the least squares estimator to the generated surrogate states to obtain $\boot{\para}_n$:
\begin{eqnarray} \label{BootLSE}
\boot{\para}_n = \arg\min\limits_{\para \in \R^{p \times q}} \sum\limits_{t=0}^{n-1} \norm{ \bootstate{t+1}- \para \extendedL{t} \bootstate{t}}{}^2.
\end{eqnarray}
The pseudo-code for the residual Bootstrap explained above is given in Algorithm \ref{bootalgo}. It will be used later at the heart of Algorithm \ref{adaptivealgo1} to design reinforcement learning policies.
\begin{algorithm}
	\caption{{: BOOTSTRAP} } \label{bootalgo}
	\begin{algorithmic}
		\State {\bf Inputs:} data $\left\{\state{t}\right\}_{t=0}^n, \left\{\extendedL{t}\right\}_{t=0}^{n-1}$
		\State {\bf Output:} bootstrapped estimate $\boot{\para}_n$
		\State Define $\estpara{n}, \left\{\residunoise{t}\right\}_{t=1}^n, \avenoise{n}$, and $\Pboot{n}$ according to \eqref{LSE}, \eqref{residuals}, \eqref{residucenter}, and \eqref{empiricaldist}, respectively
		\State Let $\bootstate{0}=\state{0}$
		\For{$t=0,1,2,\cdots,n-1$}
		\State Draw $\bootnoise{t+1}$ from $\Pboot{n}$, independently
		\State Let $\bootstate{t+1}= \estpara{n} \extendedL{t}  \bootstate{t} + \bootnoise{t+1}$
		\EndFor
		\State Return $\boot{\para}_n$ given by \eqref{BootLSE}
	\end{algorithmic}
\end{algorithm}

\begin{rem}
	If the noise process is parametrized, one can accordingly draw $\bootnoise{t}$ from the corresponding parametric sample distribution. 
\end{rem}
To see that, assume we know that the noise vectors belong to a parametric family of stochastic processes. Then, instead of using the nonparametric empirical distribution in \eqref{empiricaldist}, one can use the the residuals $\residunoise{t}$ to estimate the parameter of interest. So, letting $\Pboot{n}$ be the parametric distribution provided by the obtained estimate, the bootstrap noise $\bootnoise{t}$ can be sampled independently from $\Pboot{n}$. For example, if we know that $\noise{t}$ are i.i.d. Gaussian vectors, we can find the sample covariance matrix $\empiricalcovmat{n}=n^{-1} \sum\limits_{t=1}^n \residunoise{t}\residunoise{t}'- \avenoise{n}\:\avenoise{n}'$, and draw $\bootnoise{t}$ independently from the centered Gaussian distribution with covariance matrix $\empiricalcovmat{n}$.
\begin{rem}
	In the original version of bootstrap~\cite{efron1979bootstrap}, the covariates (i.e. the state vectors) are fixed, and only the residuals are being bootstrapped. In the time series models such as \eqref{systemeq1}, every state vector comprises of the previous noise vectors. Therefore, bootstrapping the residuals automatically leads to new state sequence $\left\{\bootstate{t}\right\}_{t=0}^\infty$ for the surrogate system~\cite{dean2017sample}.
\end{rem}    

\subsection{Policy Design}
Next, Algorithm \ref{adaptivealgo1} for decision making under uncertainty based on bootstrapping the residuals (Algorithm \ref{bootalgo}) is discussed. For this purpose, we first define the extended gain matrix $\extendedLmatrix{\para}$ based on the optimal feedback $\Lmatrix{\para}$.
\begin{defn} \label{extendedLdeff}
	For parameter $\para \in \R^{p \times q}$, using the matrix $\Lmatrix{\para}$ in \eqref{ricatti1}, define the $q \times p$ matrix $\extendedLmatrix{\para}= \left[ I_p, \Lmatrix{\para}' \right]'$.
\end{defn}
The matrix $\extendedLmatrix{\para}$ can be interpreted as an extension of the original feedback gain; applying $\action{t}=\Lmatrix{\para}\state{t}$, the closed-loop transition matrix takes the form~$A_0+B_0\Lmatrix{\para}=\para_0\extendedLmatrix{\para}$.

Recall that the true model is not known, and a reinforcement learning algorithm needs to simultaneously learn the dynamics parameter, and design the control input. To do so, we present an episodic decision making strategy outlined in Algorithm \ref{adaptivealgo1}. That is, the policy applies control actions during each episode, assuming that the approximation of the model available at the time \emph{coincides} with the true model. Then, at the end of every episode, the algorithm updates the learned model based on the history collected so far, and continues making decisions as if the new approximate model is the truth. The learning mentioned above is through a linear regression for the dynamics \eqref{systemeq1}, and the approximation consists of bootstrapping (by Algorithm \ref{bootalgo}) the model estimate obtained by the regression. In the sequel, we explain the details of the above alternating steps of the algorithm.
\begin{algorithm}
	\caption{{: POLICY DESIGN} } \label{adaptivealgo1}
	\begin{algorithmic}
		\State Let $\history{0}=\left\{ \state{0} \right\}$
		\State Choose stabilizable $\boot{\para}_0$ arbitrarily
		\For{$m=1,2,\cdots$}
		\While{$t < \rrate^{m}$}
		\State Apply feedback gain $\action{t}=\Lmatrix{\boot{\para}_t} \state{t}$ 
		\State Update history $\history{t+1}=\history{t} \cup \left\{ \state{t+1}, \extendedLmatrix{\boot{\para}_t} \right\}$
		\State $\boot{\para}_{t+1}=\boot{\para}_{t}$
		\EndWhile
		\State Update parameter $\boot{\para}_{t+1}=\text{BOOTSTRAP}\left(\history{t+1}\right)$
		\EndFor
	\end{algorithmic}
\end{algorithm}

The reinforcement learning policy is initiated with the history $\history{0}$ in the first line of Algorithm \ref{adaptivealgo1}. Then, it chooses an arbitrary stabilizable approximation of $\para_0$, denoted by $\boot{\para}_0$, and starts the system by applying the action prescribed by the model $\model{M}\left(\boot{\para}_0\right)$; i.e. $\action{t}=\Lmatrix{\boot{\para}_0}\state{t}$. Note that selection of $\boot{\para}_0$ is straightforward, since almost all (w.r.t. Lebesgue measure) parameter matrices are stabilizable \cite{abeille2018improved}. 

The starting time-points of the episodes are determined by the exponents of the reinforcement rate $\rrate>1$. That is, at every time $t = \lceil \rrate^{m}\rceil$, the approximation $\boot{\para}_t$ will be updated, while for $\rrate^{m} \leq t < \rrate^{m+1}$ the algorithm freezes $\boot{\para}_t$. In other words, whenever $t = \lceil\rrate^{m}\rceil$, Algorithm \ref{adaptivealgo1} calls the residual bootstrap Algorithm \ref{bootalgo} to get $\boot{\para}_t$ according to the collected history of the control actions and the states. So, for all $\rrate^{m} \leq t < \rrate^{m+1}$, the matrices $\extendedLmatrix{\boot{\para}_t}$ are exactly the same. The efficiency of the policy relies on the idea that the sequence $\left\{ \boot{\para}_t \right\}_{t=0}^\infty$ will provide finer approximations of the truth $\para_0$, as the algorithm proceeds (or more precisely, as $m$ grows). 

\section{Theoretical Results and Simulations} \label{analysis}
We start by establishing performance guarantees on the regret and the learning accuracy for bootstrap-based policies, supplemented by numerical examples that illustrate the behavior of Algorithm \ref{adaptivealgo1} for both identification and regulation. The following result specifies the growth rate of the regret $\regret{n}{\policy}$ for the policy $\policy$ designed by Algorithm \ref{adaptivealgo1}, as well as the decay rate of the identification error $\Mnorm{\estpara{n}-\para_0}{2}$.
\begin{thm} \label{asympTheorem}
	Letting $\policy$ be the policy given by Algorithm \ref{adaptivealgo1}, define the learned parameter $\estpara{n}$ by \eqref{LSE}. Then, we have
	\begin{eqnarray*}
		\limsup\limits_{n \to \infty} \left({{n^{-1/2} \log^{-2} n}}\right){\regret{n}{\policy}} &<& \infty , \label{upperb}\\
		\limsup\limits_{n \to \infty} \left({{n^{1/2} \log^{-2} n}}\right){\Mnorm{\estpara{n}-\para_0}{}^2} &<& \infty. \label{learningrate}
	\end{eqnarray*}
\end{thm}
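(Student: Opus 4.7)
The plan is to prove both limits together via an episodic induction that couples the regret bound with the identification error. Indexing episodes by $m$, the algorithm plays $\action{t} = \Lmatrix{\boot{\para}_m}\state{t}$ for $\rrate^{m-1} \leq t < \rrate^m$, with $\boot{\para}_m$ the output of Algorithm \ref{bootalgo} run at the start of the episode. I would establish inductively three linked invariants: (i) $\Lmatrix{\boot{\para}_m}$ stabilizes $\para_0$, yielding uniform moment bounds on $\state{t}$; (ii) the least-squares error obeys $\Mnorm{\estpara{n} - \para_0}{}^2 \lesssim (n \sigma_n^2)^{-1}$ up to logarithmic factors, where $\sigma_n^2$ measures the bootstrap-induced perturbation of the gain at sample size $n$; and (iii) bootstrap consistency gives $\sigma_n^2 \asymp \Mnorm{\boot{\para}_n - \estpara{n}}{}^2 \asymp \Mnorm{\estpara{n} - \para_0}{}^2$. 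Solving the resulting fixed-point relation $\Mnorm{\estpara{n} - \para_0}{}^4 \asymp 1/n$ returns the rate $\Mnorm{\estpara{n} - \para_0}{}^2 \asymp n^{-1/2}$ asserted in the theorem.

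For stability, I would use Lipschitz continuity of the Riccati map $\Kmatrix{\cdot}$, and hence of $\Lmatrix{\cdot}$, in a neighborhood of $\para_0$: once $\boot{\para}_m$ is close enough to $\para_0$, the closed-loop matrix $\para_0\extendedLmatrix{\boot{\para}_m}$ is a contraction in a common Lyapunov norm, so the polynomial moment assumption $\sup_t \E{\norm{\noise{t}}{}^{\tailexp}}<\infty$ propagates to $\sup_t \E{\norm{\state{t}}{}^{\tailexp}}<\infty$ together with standard martingale concentration for the empirical state covariance. Plugging $\action{t} = \Lmatrix{\boot{\para}_m}\state{t}$ into \eqref{LSE} yields
\begin{equation*}
\estpara{n} - \para_0 = \left(\sum_{t=0}^{n-1} \noise{t+1}\state{t}'\extendedL{t}'\right)\left(\sum_{t=0}^{n-1} \extendedL{t}\state{t}\state{t}'\extendedL{t}'\right)^{-1},
\end{equation*}
and the key task is to lower-bound the minimum eigenvalue of the Gram matrix. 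Because $\extendedL{t}$ is frozen within each episode, the Gram is rank-deficient in the direction orthogonal to $\extendedL{t}$ unless distinct $\boot{\para}_m$'s contribute; the aggregate excitation in that direction is of order $\sum_{k\leq m}\rrate^k \Mnorm{\boot{\para}_k - \para_0}{}^2$, which combined with self-normalized martingale concentration of $\sum_t \noise{t+1}\state{t}'\extendedL{t}'$ supplies the bound in (ii).

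For the bootstrap step, I would invoke the martingale central limit theorem of~\cite{brown1971martingale,mcleish1974dependent} and convergence of the residual empirical measure $\Pboot{n}$ to the noise law~\cite{hall2014martingale,hall2013bootstrap} to show that, conditional on the history, $\boot{\para}_n - \estpara{n}$ has a Gaussian-type law at the same scale as $\estpara{n} - \para_0$, giving (iii). Substituting the self-consistent rate into the regret, Lipschitz continuity of $\Lmatrix{\cdot}$ yields
\begin{equation*}
\regret{n}{\policy} \lesssim \sum_{m} \rrate^m \Mnorm{\Lmatrix{\boot{\para}_m} - \Lmatrix{\para_0}}{}^2 + M_n,
\end{equation*}
where $M_n$ is a zero-mean martingale controlled by a maximal inequality; using $\Mnorm{\boot{\para}_m - \para_0}{}^2 \lesssim \rrate^{-m/2} m^2$, the sum is geometric and equals $O(n^{1/2}\log^2 n)$ once $\rrate^M \asymp n$. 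The main obstacle is the endogenous character of the exploration: the bootstrap variance is itself determined by the estimation error, so the classical bootstrap-consistency statements --- which concern the marginal law of a single estimator --- must be upgraded to a quantitative, uniform-in-$m$ approximation strong enough to be re-injected into the minimum-eigenvalue bound of the Gram matrix, all while the residuals $\residunoise{t}$ arise from an adaptively generated, non-i.i.d.\ sample. Breaking this circular dependence between the exploration magnitude and the identification rate is the core technical challenge, and is where the martingale-bootstrap machinery will need the most delicate adaptation to the closed-loop setting.
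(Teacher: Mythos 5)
Your architecture matches the paper's proof almost step for step: stabilization is handled first, the rate is driven by a lower bound on the minimum eigenvalue of the Gram matrix $\sum_t \extendedL{t}\state{t}\state{t}'\extendedL{t}'$ coming from the diversity of the bootstrapped gains across episodes, the bootstrap perturbation scale is pinned down by a martingale CLT together with non-degeneracy of the residual empirical covariance, the self-consistent (fixed-point) relation yields $\Mnorm{\estpara{n}-\para_0}{}^2 \asymp n^{-1/2}$ up to logarithms, and the regret is decomposed as $\sum_t \norm{(\Lmatrix{\boot{\para}_t}-\Lmatrix{\para_0})\state{t}}{}^2$ plus a noise fluctuation term exactly as in Lemma~\ref{generalregret}. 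The paper's block decomposition of $\boot{U}_i$ into $X_{12}(i)$, $X_{22}(i)$ is a concrete implementation of your ``excitation in the direction orthogonal to a single frozen $\extendedL{t}$'' heuristic.

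There is, however, one genuine gap in the step where you convert the bootstrap randomization of the \emph{parameter} into excitation of the \emph{gain}. You write that the aggregate excitation is of order $\sum_{k}\rrate^{k}\Mnorm{\boot{\para}_k-\para_0}{}^2$ and that a Gaussian-type law for $\boot{\para}_n-\estpara{n}$ at scale $\sigma_n$ suffices. But the Gram matrix only gains rank through \emph{differences of the feedback gains} $\Lmatrix{\boot{\para}_{\lceil\rrate^j\rceil}}$ across episodes, and the map $\para\mapsto\Lmatrix{\para}$ is constant on the optimality manifold $\levelset{\para_1}=\{\para:\Lmatrix{\para}=\Lmatrix{\para_1}\}$, which has positive dimension. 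Lipschitz continuity of $\Lmatrix{\cdot}$ gives only an \emph{upper} bound on gain diversity; a perturbation of $\boot{\para}$ tangent to this manifold produces no new excitation at all, the Gram matrix stays degenerate in the input directions, and the fixed-point relation collapses. The paper closes this hole with the tangent-space characterization of the manifold (Lemma~\ref{optmanifold}), combined with the fact that the limiting Gaussian covariance $\Sigma_v$ of the bootstrap perturbation is uniformly non-degenerate (via Lemmas~\ref{smallcov} and~\ref{bootcov}), so that the perturbation has a transverse component of the full order $\sigma_n$ with the required frequency. Your proposal needs an explicit argument of this kind --- a quantitative lower bound on $\norm{(\Lmatrix{\boot{\para}_{\lceil\rrate^i\rceil}}-\Lmatrix{\boot{\para}_{\lceil\rrate^j\rceil}})v_2}{}$ in terms of the transverse part of $\boot{\para}_{\lceil\rrate^i\rceil}-\estpara{\lceil\rrate^i\rceil}$ --- before the minimum-eigenvalue recursion, and hence the claimed rates, can be established.
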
 
\ifarxiv The proof is provided in the appendix\else Due to space limitations, the proof of Theorem \ref{asympTheorem} is delegated to the longer version of the paper, which is available online~\cite{arxivversion}\fi. Technically, it relies on the careful examination of the effect of Algorithm~\ref{bootalgo} on the randomization of the feedback gains $\Lmatrix{\boot{\para}_{\lceil \rrate^i \rceil}}$. This randomization in turn diversifies the extended gain matrices $\left\{\extendedLmatrix{\boot{\para}_{\lceil \rrate^i \rceil}}\right\}_{i=1}^m$, so that their superposition efficiently explores the whole parameter space $\R^{p \times q}$, as $m$ grows. To this end, we utilize the state-of-the-art results on the behavior of the algebraic Riccati equation~\cite{faradonbeh2017finite}, properties of the optimality manifold~\cite{polderman1986necessity,polderman1986note,faradonbeh2018optimality}, and results from martingale theory~\cite{lai1983asymptotic,dani2008stochastic,abbasi2011improved}, limit distributions of dependent sequences~\cite{brown1971martingale,mcleish1974dependent}, and the bootstrap~\cite{hall2013bootstrap}.

The regulation and identification rates of Theorem \ref{asympTheorem} are modulo logarithmic factors similar to the corresponding rates of the reinforcement learning policies utilizing OFU \cite{abbasi2011regret,faradonbeh2017finite}, additive randomization~\cite{faradonbeh2018optimality}, posterior sampling~\cite{abeille2018improved}, and input perturbation~\cite{faradonbeh2018input}. Moreover, the square root scaling of the regret is efficient for adaptive regulation of LQ systems as discussed next. 
\begin{table*}
	\begin{eqnarray}
	A_0 &=& \begin{bmatrix}
	1.07     &    0 &  -0.37\\
	0.48  & -0.89   & 0.85\\
	0  &  0.04 &  -0.93
	\end{bmatrix}, \:\:\:\:\:\:
	B_0= \begin{bmatrix}
	-0.48  &  0.44  & -0.30\\
	-0.52  &  0.59  &  0.26\\
	0.30    &     0  & -0.74
	\end{bmatrix} \label{dynamicsmatrices}\\
	Q_x &=& \begin{bmatrix}
	0.65  & -0.08  & -0.14\\
	-0.08   & 0.57  &  0.26\\
	-0.14   & 0.26&    1.00
	\end{bmatrix},\:\:\:\:\:\:
	Q_u = \begin{bmatrix}
	0.20 &   0.05  &  0.09\\
	0.05  &  0.14 &   0.04\\
	0.08   & 0.04&    0.28
	\end{bmatrix} \label{costmatrices}\\
	\Kmatrix{\para_0} &=& \begin{bmatrix}
	0.94   & 0.06  & -0.32\\
	0.06  &  0.88 &   0.02\\
	-0.32   & 0.02  &  1.37
	\end{bmatrix},\:\:\:\:\:\:
	\Lmatrix{\para_0}= \begin{bmatrix}
	0.64  & -0.13 &   0.44\\
	-0.71  &  0.63 &  -0.11\\
	0.22   & 0.08 &  -0.91
	\end{bmatrix} \label{KLmatrices}
	\end{eqnarray}
\end{table*}

Recalling the discussion at the beginning of Section \ref{algos}, an adaptive control policy needs to sufficiently explore the parameter space in order to balance the trade-off of identification and regulation. For falsifying the imprecise approximation $\estpara{n}$ through an exploration procedure, the control signals $\left\{\action{t}\right\}_{t=0}^{n-1}$ need to deviate from the optimal feedback gain $\Lmatrix{\para_0}$. More precisely, for a policy $\policy$, let the deviations from the optimal feedback be $\epsilon_t= \norm{\action{t}-\Lmatrix{\para_0}\state{t}}{2}$, for $0 \leq t <n$. Then, observing the history $\history{n}$, the error of estimating the true dynamics parameter $\para_0$ is at least $\sigma_n$ (modulo a constant factor), where $\sigma_n^{-2}=\sum\limits_{t=0}^{n-1} \epsilon_t^2$~\cite{lai1986asymptotically,sarkar2018fast}. Hence, if $\policy$ aims to falsify $\estpara{n}$, the difference $\Mnorm{\estpara{n}-\para_0}{2}$ needs to be in the order of magnitude at least $\sigma_n$~\cite{bittanti2006adaptive,faradonbeh2018optimality}. Whenever $\policy$ employs $\boot{\para}_n$ for designing control inputs, $\liminf\limits_{n \to \infty} \sigma_n^{-1} \Mnorm{\boot{\para}_n-\para_0}{2}>0$ holds, since $\boot{\para}_n$ needs to be found through $\estpara{n}$.  

On the other hand, for the above deviations we have
\begin{equation} \label{tightregret}
\liminf\limits_{n \to \infty} \sigma_n^2 \regret{n}{\policy} > 0,
\end{equation}
according to the regret specification recently established~\cite{faradonbeh2018optimality}. Further, applying the adaptive feedback $u(n)=\Lmatrix{\boot{\para}_n}x(n)$ at time $n$, the increase $\regret{n+1}{\policy}-\regret{n}{\policy}$ in the regret is approximately $\Mnorm{\Lmatrix{\boot{\para}_n}-\Lmatrix{\para_0}}{2}^2$~\cite{faradonbeh2018input}, which is up to a constant factor at least $\Mnorm{{\boot{\para}_n}-{\para_0}}{2}^2$~\cite{faradonbeh2017finite,faradonbeh2018optimality}. Thus, the lower bound $\sigma_n$ for $\Mnorm{\boot{\para}_n-\para_0}{2}$ implies that $\liminf\limits_{n \to \infty} \sigma_n^{-2} \left(\regret{n+1}{\policy}-\regret{n}{\policy}\right) > 0$. Putting the latter result and \eqref{tightregret} together, we obtain $\liminf\limits_{n \to \infty} \left(\regret{n+1}{\policy}^2 - \regret{n}{\policy}^2\right) > 0$, which provides the lower bound $\liminf\limits_{n \to \infty} n^{-1/2}\regret{n}{\policy}> 0$. Note that a rigorous proof of the above lower bound argument is beyond the scope of this paper. For more detailed discussions, we refer the reader to the aforementioned references.
\begin{figure} 
	\centering
	\scalebox{.7}
	{\includegraphics {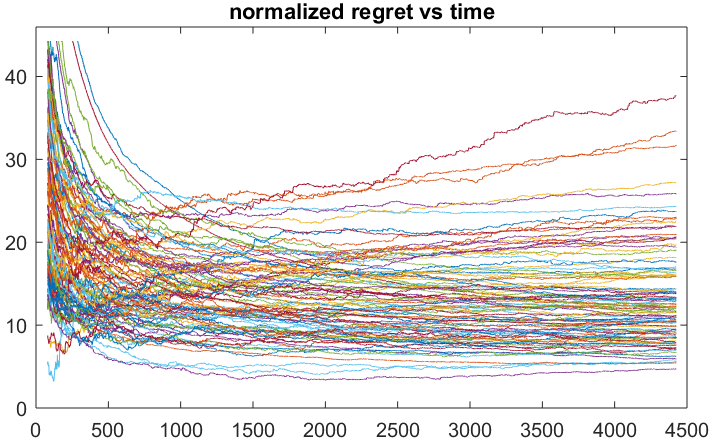}} 
	\caption{Normalized regret $n^{-1/2} \regret{n}{\policy}$ vs $n$, for Algorithm~\ref{adaptivealgo1} with $\rrate=1.2$.}
	\label{RBregret}
\end{figure}
\begin{figure} 
	\centering
	\scalebox{.7}
	{\includegraphics {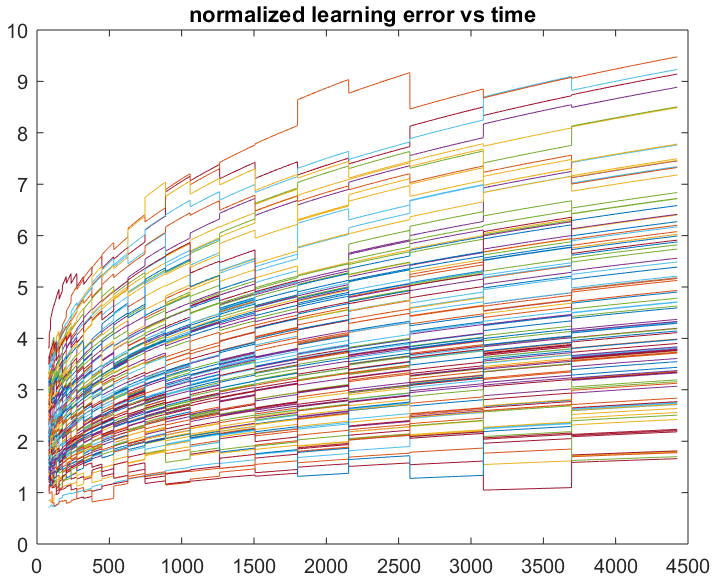}} 
	\caption{Normalized identification error $n^{1/4} \Mnorm{\estpara{n}-\para_0}{2}$ vs $n$, for Algorithm~\ref{adaptivealgo1} with $\rrate=1.2$.}
	\label{RB_est_error}
\end{figure}
\subsection{Numerical Illustration}
Next, we present numerical analyses employing Algorithm \ref{adaptivealgo1} for decision-making under uncertainty. Henceforth, let $\policy$ be the reinforcement learning policy provided by Algorithm \ref{adaptivealgo1}, with reinforcement rate $\rrate=1.2$. The true dynamical model and cost matrices are provided in \eqref{dynamicsmatrices} and \eqref{costmatrices}, respectively. Solving the algebraic Riccati equation, we get $\Kmatrix{\para_0}, \Lmatrix{\para_0}$ given in \eqref{KLmatrices}, which lead to a closed-loop matrix of the spectral radius $\eigmax{\para_0 \extendedLmatrix{\para_0}}=0.26$. 
\begin{figure*} 
	\centering
	\scalebox{.45}
	{\includegraphics {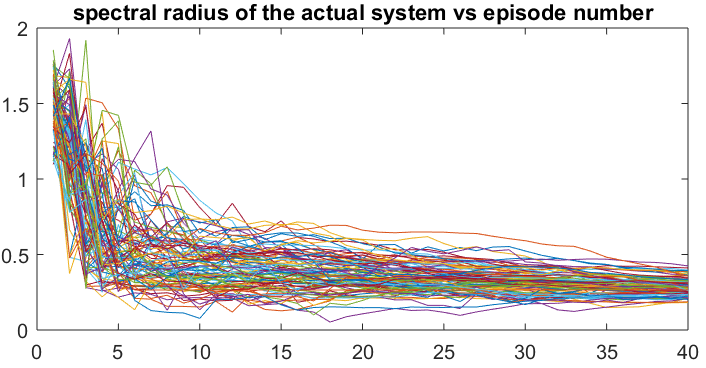}} 
	\scalebox{.45}
	{\includegraphics {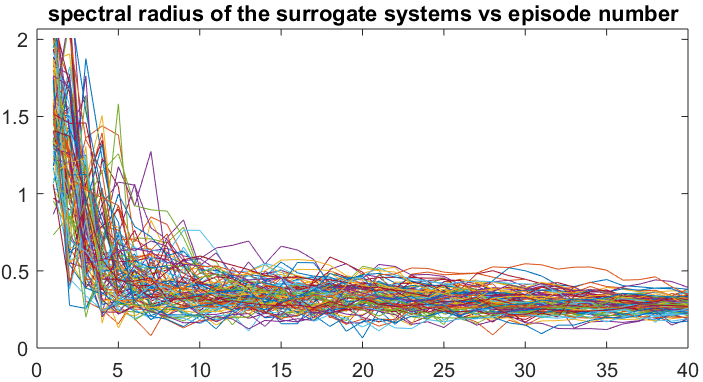}} 
	\caption{Stability of the closed-loop matrices for Algorithm~\ref{adaptivealgo1} with $\rrate=1.2$: the spectral radius of the actual system $\eigmax{\para_0 \extendedLmatrix{\boot{\para}_{\lceil \rrate^m \rceil}}}$, and the surrogate system $\eigmax{\estpara{\lceil \rrate^m \rceil} \extendedLmatrix{\boot{\para}_{\lceil \rrate^m \rceil}}}$, are reported as functions of $m$.}
	\label{eigmax}
\end{figure*}

Figure \ref{RBregret} depicts the normalized regret as a function of $n$, for $100$ replicates of the stochastic linear system in \eqref{systemeq1}. The corresponding normalized identification errors are plotted in Figure \ref{RB_est_error}. These figures are in a full agreement with the theoretical result of Theorem~\ref{asympTheorem}; both normalized rates $n^{-1/2} \regret{n}{\policy}$ and $n^{1/4} \Mnorm{\estpara{n}-\para_0}{2}$ are dominated by logarithmic factors of the time index $n$. In Figure \ref{eigmax}, we plot the resulting spectral radius of the reinforcement learning policy $\policy$ for both the actual system of the dynamics parameter $\para_0$, as well as that of the surrogate system of $\estpara{n}$. According to Figure \ref{eigmax}, Algorithm~\ref{adaptivealgo1} fully stabilizes the system, even though in the first few episodes the system is unstable. 
\begin{figure} 
	\centering
	\scalebox{.47}
	{\includegraphics {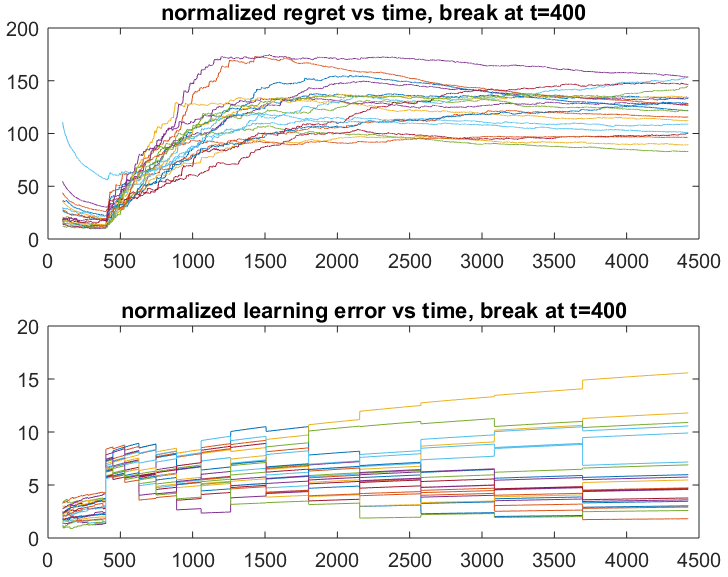}} 
	\caption{Normalized regret $n^{-1/2} \regret{n}{\policy}$ and normalized learning error $n^{1/4} \Mnorm{\estpara{n}-\para_0}{2}$ vs $n$. Algorithm~\ref{adaptivealgo1} is run with $\rrate=1.2$, while a break occurs at time $t=400$.}
	\label{1break}
\end{figure}
The ensuing figures indicate the robustness of Algorithm \ref{adaptivealgo1} to structural breaks. Figure \ref{1break} shows the performance of $n^{-1/2} \regret{n}{\policy}$ and $n^{1/4} \Mnorm{\estpara{n}-\para_0}{2}$ for a single break in the model, wherein at time $t =400$ the dynamics matrices suddenly become
\begin{equation*}
{\small A_0 = \begin{bmatrix}
	1.07     &    0 &  -0.37\\
	0.48  & -0.89   & 0.85\\
	0.44  &  0.04 &  0
	\end{bmatrix}, B_0= \begin{bmatrix}
	-0.48  &  0.44  & -0.30\\
	-0.52  &  0.59  &  0.26\\
	0.30    &    -0.44  & 0
	\end{bmatrix}}.
\end{equation*}

A similar performance analysis while the system incurs two breaks is provided in Figure~\ref{2breaks}. The first break is similar to the one mentioned above, and occurs at time $t=200$. Then, for the second break at time $t=700$, the true dynamics matrices change to
\begin{equation*}
{\small A_0 = \begin{bmatrix}
	1.07     &    0 &  -1.04\\
	0.48  & -0.89   & 0.85\\
	0.44  &  0.81 &  0
	\end{bmatrix}, B_0= \begin{bmatrix}
	-0.48  &  0.44  & -0.30\\
	-0.52  &  0.59  &  -0.26\\
	0.30    &    -0.30  & 0
	\end{bmatrix}}.
\end{equation*}
\begin{figure} 
	\centering
	\scalebox{.47}
	{\includegraphics {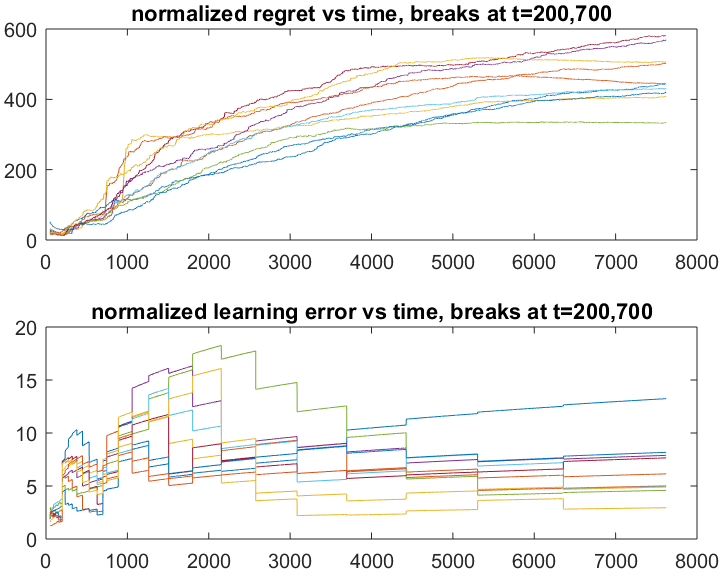}} 
	\caption{Nornmalized regret $n^{-1/2} \regret{n}{\policy}$ and normalized learning error $n^{1/4} \Mnorm{\estpara{n}-\para_0}{2}$ vs $n$. Algorithm~\ref{adaptivealgo1} is run with $\rrate=1.2$, while two breaks occur at times $t=200$, $t=700$.}
	\label{2breaks}
\end{figure}

According to Figures~\ref{1break} and~\ref{2breaks}, Algorithm~\ref{adaptivealgo1} is robust to remarkably large values of model mis-specifications. Note that the reinforcement learning policy is fully ignorant of the breaks. So, $\policy$ needs to adaptively adjust its decision-making law toward the new optimal policies.

The rationale for the exhibited robustness is as follows: when a break occurs, the parameter estimate $\estpara{}$ becomes an inaccurate approximation of the true dynamics matrix $\para_0$. This in turn leads the regression residuals $\left\{\residunoise{t}\right\}_{t=1}^n$ becoming large. Therefore, the bootstrapped parameter $\boot{\para}$ computed by Algorithm \ref{bootalgo} provides a large randomization, which in turn leads to an increase in the exploration phase. Then, after a few episodes, the resulting enhanced exploration provides more accurate estimates $\estpara{}$, and the above {\em negative feedback} procedure proceeds. Thus, as time grows, Algorithm~\ref{adaptivealgo1} \emph{self-tunes} to the equilibrium of the suitable amount of exploration. 

The above argument intuitively indicates that the aforementioned equilibrium is a stable one. Since the endogenous randomization of the bootstrap procedure consistently assesses the accuracy of the fitted model $\estpara{}$, the resulting adaptive policy automatically adjusts the old decision-making strategy to the new environment. Hence, the algorithm accordingly addresses the unexpected flaw of the \emph{sudden and unknown} changes in the true model $\model{M}\left( \para_0 \right)$, as well as the resulting unpredicted deviations in the trajectory of the state sequence $\left\{ \state{t} \right\}_{t=0}^\infty$. 

\section{Concluding Remarks}
We proposed a reinforcement learning algorithm for sequential decision-making for an LQ system with unknown temporal dynamics. The presented model-based policy is based on residual bootstrap, and is shown to be efficient in terms of both identification and regulation. Namely, we establish the rates for the worst-case regret, as well as the learning accuracy. Further, we discussed the robustness of the bootstrap method for handling unexpected changes in the dynamical model. 

As the first work on bootstrap-based policies for LQ models, it poses a number of interesting questions. For example, theoretical analysis for addressing the performance of bootstrap method under \emph{imperfect} observation is a natural direction for future work. Further subjects of interest include design and analysis of fully non-parametric randomization methods such as {\em covariate resampling}. Finally, extending the presented framework to {\em model-free} algorithms can be considered as another fruitful research direction to examine.

\ifarxiv
\appendices
\section{Auxiliary Results} \label{AuxResults}
Next, we present auxiliary results being used in the proof of Theorem \ref{asympTheorem} in Appendix \ref{ProofApp}. First, we present the regret bounds of Lemma \ref{generalregret}, for which the proof can be found in the work of Faradonbeh et al.~\cite{faradonbeh2018optimality}, using martingale convergence analysis of Lai and Wei~\cite{lai1982least}. Then, Lemma \ref{optmanifold} provides the local specification of the optimality manifold which is established for both full-rank \cite{polderman1986necessity,polderman1986note} and rank-deficient dynamics matrices \cite{faradonbeh2018optimality}. Subsequently, Lemma \ref{Lailemma} is presented to study convergence rates of linear regression procedures. The asymptotic~\cite{lai1983asymptotic} and non-asymptotic~\cite{dani2008stochastic,abbasi2011improved} proofs of Lemma~\ref{Lailemma} are available in the literature. 

Then, we state Lemma \ref{smallcov} which addresses the behavior of the empirical covariance matrix of the state sequence of a stabilized system~\cite{lai1985asymptotic}. Further, we have Lemma \ref{Lipschitzlemma} that is providing the local Lipschitz continuity property (i.e. in a neighborhood of $\para_0$) of the feedback matrix $\Lmatrix{\para}$~\cite{faradonbeh2017finite}. Finally, we establish Lemma \ref{bootcov} on the population covariance matrix induced by the empirical probability measure $\Pboot{n}$ defined in \eqref{empiricaldist}.

\begin{lem} \label{generalregret}
	For the sequence of $r \times p$ matrices $\left\{ G_t \right\}_{t=0}^\infty$, suppose that there is a filtration $\filter{t}$ such that for all $t \geq 0$, $\state{t},G_t,\noise{t}$ are $\mathcal{F}_t$-measurable, and $\E{\noise{t+1} \big| \mathcal{F}_t} = 0$. Then, for the regret of the policy 
	\begin{eqnarray*}
	\policy: \:\:\: \action{t}= G_t \state{t}, \:\:\: t=0,1,2,\cdots,
	\end{eqnarray*}
	the following holds:
	\begin{eqnarray*}
		\limsup\limits_{n \to \infty} \frac{\regret{n}{\policy}}{\sum\limits_{t=0}^{n-1} \norm{ \left(\Lmatrix{\para_0} - G_t \right) \state{t} }{}^2+ \norm{\sum\limits_{t=1}^n \left(A_0+B_0\Lmatrix{\para_0}\right)^{n-t}\noise{t}}{2}^2} <\infty .
	\end{eqnarray*}
\end{lem}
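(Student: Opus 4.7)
The plan is to perform the classical LQR cost decomposition against the optimal value function $V(x)=x'\Kmatrix{\para_0}x$ and then match each leftover term against the two summands of the denominator. The key algebraic identity to derive first is the \emph{Bellman excess-cost relation}: writing $P_0=\Kmatrix{\para_0}$, $L_0=\Lmatrix{\para_0}$, and $M_0=Q_u+B_0'P_0 B_0$, the Riccati equation together with $L_0=-M_0^{-1}B_0'P_0A_0$ yields, for every state $x$ and every action $u$,
\begin{equation*}
    \instantcost{}{}(x,u)+(A_0 x+B_0 u)'P_0(A_0 x+B_0 u)\;=\;x'P_0 x+\bigl(u-L_0 x\bigr)'M_0\bigl(u-L_0 x\bigr).
\end{equation*}
I would verify this by expanding the quadratic in $u$ and using $M_0 L_0=-B_0'P_0 A_0$, so the minimizer is $u=L_0x$ and the residual excess is a pure quadratic form in $u-L_0x$.

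Next I would plug $\action{t}=G_t\state{t}$ and the stochastic dynamics $\state{t+1}=A_0\state{t}+B_0\action{t}+\noise{t+1}$ into the identity, take conditional expectation given $\mathcal{F}_t$ (using that $G_t,\state{t}$ are $\mathcal{F}_t$-measurable and $\E{\noise{t+1}\mid\mathcal{F}_t}=0$), and sum from $t=0$ to $n-1$. After telescoping $x_t'P_0x_t-x_{t+1}'P_0x_{t+1}$ and separating the martingale part, this yields
\begin{equation*}
    \sum_{t=0}^{n-1}\instantcost{t}{\policy}\;=\;\state{0}'P_0\state{0}-\state{n}'P_0\state{n}+n\,\tr{P_0\covmat{}}+\mathcal{M}_n+\sum_{t=0}^{n-1}\bigl\|M_0^{1/2}(G_t-L_0)\state{t}\bigr\|^{2},
\end{equation*}
where $\mathcal{M}_n=\sum_{t=0}^{n-1}\bigl(\state{t+1}'P_0\state{t+1}-\E{\state{t+1}'P_0\state{t+1}\mid\mathcal{F}_t}\bigr)$ is a zero-mean martingale. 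The same identity applied to $\optimalpolicy$ gives an analogous expression where the deviation sum vanishes and the state trajectory is the optimal one $\state{t}^\star$, which satisfies $\state{n}^\star=\sum_{t=1}^{n}(A_0+B_0 L_0)^{n-t}\noise{t}+(A_0+B_0L_0)^{n}\state{0}$.

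Subtracting the two expressions and exploiting $\state{0}=\state{0}^\star$, the regret $\regret{n}{\policy}$ reduces to the sum of: (i) the quadratic deviation $\sum_{t=0}^{n-1}\|M_0^{1/2}(G_t-L_0)\state{t}\|^{2}$, which is directly controlled by a constant multiple of the first summand in the denominator since $M_0\succ 0$; (ii) the boundary contributions $(\state{n}^\star)'P_0\state{n}^\star-\state{n}'P_0\state{n}$, where the negative term is dropped and the positive one is bounded by $\Mnorm{P_0}{2}\,\bigl\|\sum_{t=1}^{n}(A_0+B_0 L_0)^{n-t}\noise{t}\bigr\|^{2}$ modulo an exponentially decaying transient from the stable matrix $(A_0+B_0L_0)^n\state{0}$; and (iii) the martingale difference $\mathcal{M}_n-\mathcal{M}_n^\star$. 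The hardest step is dominating the martingale term: I would apply the strong law for square-integrable martingales of Lai--Wei~\cite{lai1982least}, which forces $\mathcal{M}_n$ to be $o\bigl(\langle\mathcal{M}\rangle_n\bigr)$ almost surely, and then bound the predictable quadratic variation by $O\bigl(\sum_t\|\state{t}\|^2\bigr)$ via the $4$th-moment noise assumption ($\alpha>4$). A standard closed-loop-perturbation argument (writing $\state{t+1}=(A_0+B_0L_0)\state{t}+B_0(G_t-L_0)\state{t}+\noise{t+1}$ and iterating) then shows $\sum_t\|\state{t}\|^2$ is itself dominated by the denominator, closing the estimate and producing the claimed $\limsup$ bound.
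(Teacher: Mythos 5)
Your overall skeleton is the right one: the paper does not prove this lemma itself but defers to Faradonbeh et al.\ and the Lai--Wei martingale analysis, and the Bellman excess-cost identity, the telescoping against $V(x)=x'\Kmatrix{\para_0}x$, and the reduction of the regret to (i) the quadratic deviation term, (ii) the boundary terms $(\state{n}^\star)'P_0\state{n}^\star-\state{n}'P_0\state{n}$, and (iii) a martingale remainder are exactly the expected decomposition. Steps (i) and (ii) are handled correctly: $M_0\succ 0$ maps the deviation sum into the first summand of the denominator, and $\state{n}^\star=(A_0+B_0L_0)^n\state{0}+\sum_{t=1}^n(A_0+B_0L_0)^{n-t}\noise{t}$ maps the boundary term into the second summand up to an exponentially decaying transient.

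The gap is in step (iii). You bound the quadratic variation of $\mathcal{M}_n$ by $O\bigl(\sum_t\norm{\state{t}}{}^2\bigr)$ and then assert that $\sum_t\norm{\state{t}}{}^2$ is ``dominated by the denominator.'' That last assertion is false. Iterating your own closed-loop perturbation $\state{t+1}=(A_0+B_0L_0)\state{t}+B_0(G_t-L_0)\state{t}+\noise{t+1}$ gives $\sum_t\norm{\state{t}}{}^2\lesssim\sum_t\norm{(G_t-L_0)\state{t}}{}^2+\sum_t\norm{\state{t}^\star}{}^2$, and the second piece grows linearly in $n$ (it is $n$ times the stationary second moment), whereas the denominator contains only the \emph{single} terminal quantity $\norm{\sum_{t=1}^n(A_0+B_0L_0)^{n-t}\noise{t}}{2}^2=\norm{\state{n}^\star-(A_0+B_0L_0)^n\state{0}}{2}^2$, which is $O(1)$. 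So when $\sum_t\norm{(G_t-L_0)\state{t}}{}^2$ stays bounded, your bound on the martingale term is $o(n)$ against an $O(1)$ denominator, and the ratio is not controlled. The fix is to estimate the \emph{difference} martingale $\mathcal{M}_n-\mathcal{M}_n^\star$ directly: since both trajectories are driven by the same noise, the increments $\noise{t+1}'P_0\noise{t+1}-\tr{P_0\covmat{}}$ cancel, and what remains has increments $2\noise{t+1}'P_0\bigl[(A_0+B_0L_0)(\state{t}-\state{t}^\star)+B_0(G_t-L_0)\state{t}\bigr]$. The coupled error $\state{t}-\state{t}^\star$ obeys a noise-free recursion driven by $B_0(G_t-L_0)\state{t}$ through the stable matrix $A_0+B_0L_0$, so the predictable quadratic variation of the difference martingale is $O\bigl(\sum_t\norm{(G_t-L_0)\state{t}}{}^2\bigr)$, and only then does the Lai--Wei strong law place the martingale remainder strictly below the first summand of the denominator. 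Without this cancellation your argument does not close.
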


\begin{lem} \label{optmanifold}
	For the stabilizable parameter $\para_1=\left[ A_1,B_1 \right]$, let $\levelset{\para_1}$ be the manifold of optimal feedback gains:
	\begin{eqnarray*} \label{levelsetdeff}
	\levelset{\para_1} = \left\{ \para \in \R^{p \times q} : \Lmatrix{\para} = \Lmatrix{\para_1} \right\}.
	\end{eqnarray*}
	Then, the tangent space of $\levelset{\para_1}$ at point $\para_1$ consists of matrices $\left[ M,N \right]$, such that $M \in \R^{p \times p}, N \in \R^{p \times r}$ satisfy
	\begin{eqnarray*} \label{tangentspace}
	N' \Kmatrix{\para_1}D_1 + B_1' Z + B_1' \sum\limits_{k=0}^\infty {D_1'}^k \left(D_1'Z+Z'D_1\right) {D_1}^{k+1} =0_{r \times p},
	\end{eqnarray*}
	where $D_1 = \para_1 \extendedLmatrix{\para_1}$, $Z = \Kmatrix{\para_1} \left( M + N\Lmatrix{\para_1} \right)$.
\end{lem}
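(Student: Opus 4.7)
The plan is to characterize the tangent space of $\levelset{\para_1}$ as the kernel of the differential of the map $\para \mapsto \Lmatrix{\para}$ at $\para_1$. To that end, I take a smooth curve $\para(s) = \para_1 + s[M, N] + o(s)$ constrained to lie in $\levelset{\para_1}$, so that $\Lmatrix{\para(s)} \equiv \Lmatrix{\para_1}$, and differentiate the relations defining $\Kmatrix{\cdot}$ and $\Lmatrix{\cdot}$ at $s=0$. The decisive simplification is that since $L := \Lmatrix{\para_1}$ is held fixed along the curve, the Riccati equation reduces along the curve to a Lyapunov equation in $K(s) = \Kmatrix{\para(s)}$, which makes the derivative computation linear.

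Concretely, writing $D(s) = A(s) + B(s) L$ and using the closed-loop Lyapunov form
\begin{equation*}
K(s) = Q_x + D(s)' K(s) D(s) + L' Q_u L,
\end{equation*}
differentiation at $s = 0$ together with $\dot D(0) = M + N L$ yields $\dot K(0) - D_1' \dot K(0) D_1 = Z' D_1 + D_1' Z$, where $Z = \Kmatrix{\para_1}(M + N L)$. Since $D_1$ is Schur stable by Assumption \ref{stabilizability} and the Riccati properties recalled in Section \ref{model}, this Lyapunov equation inverts explicitly as
\begin{equation*}
\dot K(0) = \sum_{k=0}^\infty {D_1'}^k (Z' D_1 + D_1' Z) D_1^k.
\end{equation*}
I then differentiate the first-order optimality identity $B(s)' K(s) D(s) + Q_u L = 0$, which is algebraically equivalent to the defining formula $\Lmatrix{\para(s)} = -(B(s)' K(s) B(s) + Q_u)^{-1} B(s)' K(s) A(s)$, to obtain
\begin{equation*}
N' \Kmatrix{\para_1} D_1 + B_1' \dot K(0) D_1 + B_1' Z = 0,
\end{equation*}
and substituting the preceding series for $\dot K(0) D_1$ reproduces exactly the identity claimed in the lemma.

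This shows that every tangent vector satisfies the stated linear equation. The reverse inclusion---every $[M, N]$ solving the equation is realized by an actual curve in $\levelset{\para_1}$---follows from the implicit function theorem when $\para_1$ has full row rank (the regime of Polderman~\cite{polderman1986necessity,polderman1986note}), while the rank-deficient case requires the careful parametrization constructed in Faradonbeh et al.~\cite{faradonbeh2018optimality}, which I would cite rather than reprove. The main obstacle is precisely this converse direction in the rank-deficient regime, since there $\levelset{\para_1}$ need not be a smooth embedded submanifold near $\para_1$; the algebraic differentiation outlined above is otherwise routine bookkeeping, provided one chooses the closed-loop Lyapunov form rather than the Riccati form so that $\dot K$ appears linearly.
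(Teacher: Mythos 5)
Your derivation is correct, but it is worth noting that the paper itself does not prove this lemma at all: it is stated as an auxiliary result imported from the literature, with the full-rank case attributed to Polderman~\cite{polderman1986necessity,polderman1986note} and the rank-deficient case to Faradonbeh et al.~\cite{faradonbeh2018optimality}. What you supply is therefore a self-contained argument where the paper offers only citations. Your key algebraic move --- replacing the Riccati form by the closed-loop Lyapunov identity $K(s)=Q_x+L'Q_uL+D(s)'K(s)D(s)$, which is legitimate precisely because $\Lmatrix{\para(s)}\equiv\Lmatrix{\para_1}$ along a curve in $\levelset{\para_1}$ --- does make $\dot K(0)$ appear linearly, and I have checked that differentiating it gives $\dot K - D_1'\dot K D_1 = Z'D_1+D_1'Z$, that Schur stability of $D_1$ inverts this to the stated series, and that differentiating $B(s)'K(s)D(s)+Q_uL=0$ and substituting $\dot K(0)D_1$ reproduces the lemma's identity exactly. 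This establishes that tangent vectors to $\levelset{\para_1}$ satisfy the equation; you are right that the genuine content of the lemma (and the reason the paper cites two separate bodies of work) is the converse inclusion, i.e., that the solution set of this linear equation is no larger than the tangent space, which fails to follow from the implicit function theorem when the relevant differential drops rank, and your decision to cite~\cite{faradonbeh2018optimality} for that regime rather than reprove it matches what the paper itself does. The net effect is that your write-up is strictly more informative than the paper's treatment for the necessity direction and equivalent to it for sufficiency.
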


\begin{lem} \label{Lailemma}
	Consider the dynamical system $\state{t+1}=\para_0 \extendedL{t}\state{t}+\noise{t+1}$. Then, define
	\begin{eqnarray*}
		U_n &=& \sum\limits_{t=0}^{n-1} \extendedL{t} \state{t} \state{t}' \extendedL{t}'  \in \R^{ q \times q}, \\
		W_n &=& \sum\limits_{t=0}^{n-1} \noise{t+1} \state{t}' \extendedL{t}' U_n^{-1/2} \in \R^{p \times q},
	\end{eqnarray*}
	where $U_n^{-1}$ is the Moore-Penrose inverse. It holds that
	\begin{eqnarray*}
		\limsup\limits_{n \to \infty} \frac{\eigmax{W_n W_n'}}{\log \eigmax{U_n}} < \infty.
	\end{eqnarray*}
	Further, since $\estpara{n}$ provided by \eqref{LSE} satisfies the normal equation $\estpara{n}U_n = \sum\limits_{t=0}^{n-1} \state{t+1} \state{t}' \extendedL{t}'$, we have 
	\begin{equation*}
	\left( \estpara{n} - \para_0 \right) U_n \left( \estpara{n} - \para_0 \right)' = W_n W_n'.
	\end{equation*}
	Therefore, we get 
	\begin{eqnarray*}
	\limsup\limits_{n \to \infty} \frac{\eigmin{U_n}\Mnorm{\estpara{n} - \para_0}{}^2}{\log \eigmax{U_n}} < \infty. 
	\end{eqnarray*}
	
\end{lem}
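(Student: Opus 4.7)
The proof splits into two parts: (i) a self-normalized martingale concentration bound on the matrix $W_n$, and (ii) a purely algebraic step that transfers this bound into the estimation-error inequality via the normal equation.

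For the algebraic step, substituting $\state{t+1} = \para_0 \extendedL{t} \state{t} + \noise{t+1}$ into the normal equation $\estpara{n} U_n = \sum_{t=0}^{n-1} \state{t+1} \state{t}' \extendedL{t}'$ yields $(\estpara{n} - \para_0) U_n = M_n$, where $M_n = \sum_{t=0}^{n-1} \noise{t+1} \state{t}' \extendedL{t}' = W_n U_n^{1/2}$. Multiplying on the right by $(\estpara{n} - \para_0)'$ produces the identity $(\estpara{n} - \para_0) U_n (\estpara{n} - \para_0)' = W_n W_n'$, whose largest eigenvalue is bounded from below by $\eigmin{U_n}\,\Mnorm{\estpara{n} - \para_0}{}^2$. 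Thus the second conclusion is a direct algebraic consequence of the first, and the whole task reduces to establishing $\eigmax{W_n W_n'} = O(\log \eigmax{U_n})$ almost surely.

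For the martingale step, fix a unit vector $v \in \R^p$ and consider the scalar process $v'W_n = \bigl(\sum_{t=0}^{n-1} (v'\noise{t+1}) \state{t}' \extendedL{t}'\bigr) U_n^{-1/2}$, which is a self-normalized martingale transform with respect to the natural filtration generated by the history. The plan is to apply the method-of-mixtures bound of de la Pe\~na, Lai, and Abbasi-Yadkori: for any $\delta>0$, with probability at least $1-\delta$, $\norm{v'W_n}{}^2 \leq C \log\bigl(\det(I_q + U_n)/\delta\bigr)$, provided $v'\noise{t+1}$ admits a suitable conditional moment bound. The assumption $\sup_t \E{\norm{\noise{t}}{}^\alpha} < \infty$ for some $\alpha > 4$ is more than enough after a truncation argument. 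Uniformizing over $v$ by a finite $\epsilon$-net on the unit sphere in the fixed-dimensional space $\R^p$, combined with the elementary bound $\log \det(I_q + U_n) \leq q \log(1 + \eigmax{U_n})$, yields $\eigmax{W_n W_n'} \leq C'\log\eigmax{U_n} + O(\log n)$ on a set of probability one. Taking $\limsup$ gives the first conclusion.

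The main obstacle is that both the regressors $\extendedL{t}\state{t}$ and hence $U_n$ are stochastic and potentially unbounded, so the off-the-shelf sub-Gaussian self-normalized bound does not apply directly. The resolution, following Lai-Wei, is to truncate the noise at a slowly growing threshold and decompose the sample space by a stopping-time adapted to dyadic bands of $\eigmax{U_n}$; on each such band $U_n$ is essentially deterministic up to constants, so a Laplace-transform/change-of-measure argument applies to the truncated increments, after which a routine Borel-Cantelli step across dyadic scales recovers the advertised $\limsup$ rate, precisely as in the cited references \cite{lai1983asymptotic,dani2008stochastic,abbasi2011improved}.
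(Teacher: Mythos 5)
The paper itself offers no proof of Lemma~\ref{Lailemma}: it is imported from the literature, with the asymptotic proof attributed to Lai--Wei~\cite{lai1983asymptotic} and non-asymptotic variants to~\cite{dani2008stochastic,abbasi2011improved}. Your sketch is therefore not competing with an in-paper argument but reconstructing the cited one, and it does so essentially correctly. The algebraic half is exactly right: $(\estpara{n}-\para_0)U_n = W_n U_n^{1/2}$ gives the identity $(\estpara{n}-\para_0)U_n(\estpara{n}-\para_0)'=W_nW_n'$, and $\eigmax{DU_nD'}\geq\eigmin{U_n}\Mnorm{D}{}^2$ transfers the bound. For the probabilistic half you correctly identify both the target ($\eigmax{W_nW_n'}=O(\log\eigmax{U_n})$ a.s.), the obstacle (the noise here has only a bounded $\alpha$-th moment for some $\alpha>4$, not a conditional sub-Gaussian tail, and the regressors $\extendedL{t}\state{t}$ are adapted and unbounded), and the standard resolution via truncation plus stratification over dyadic bands of $\eigmax{U_n}$ and Borel--Cantelli, which is indeed how Lai and Wei argue. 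Two details you elide but should acknowledge: truncating $\noise{t+1}$ at a growing threshold destroys the martingale-difference property, so the truncated increments must be re-centered and the drift so introduced controlled by the moment bound; and the $O(\log n)$ slack from the union bound over confidence levels is absorbed into $O(\log\eigmax{U_n})$ only because $\extendedL{t}=[I_p,G_t']'$ forces $\eigmax{U_n}\geq\eigmax{\sum_{t}\state{t}\state{t}'}$ to grow at least linearly in $n$. With those caveats, your route and the paper's reference are one and the same argument, and deferring the remaining bookkeeping to \cite{lai1983asymptotic,abbasi2011improved} is exactly what the authors do.
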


\begin{lem} \label{smallcov}
	Suppose that the control feedback matrix $\Lmatrix{\boot{\para}}$ is applied to the dynamical model $\model{M}\left( \para_0 \right)$. Hence, plugging $\action{t}=\Lmatrix{\boot{\para}}\state{t}$ in \eqref{systemeq1}, the system evolves according to $\state{t+1}=\para_0 \extendedLmatrix{\boot{\para}}\state{t}+\noise{t+1}$. Assuming $\eigmax{\para_0 \extendedLmatrix{\boot{\para}}}<1$, for the empirical covariance matrix $V_n=\sum\limits_{t=0}^n \state{t}\state{t}'$ the following holds:
	\begin{eqnarray*}
		\lim\limits_{n \to \infty} n^{-1}V_n = \sum\limits_{k=0}^\infty \left( \para_0 \extendedLmatrix{\boot{\para}} \right)^k \covmat{} {\left( \para_0 \extendedLmatrix{\boot{\para}} \right)'}^k.
	\end{eqnarray*}
\end{lem}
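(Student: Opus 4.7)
\medskip
\noindent\textbf{Proof proposal for Lemma \ref{smallcov}.}

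Let $D = \para_0\extendedLmatrix{\boot{\para}}$, and fix $\rho \in (\eigmax{D},1)$. By Gelfand's formula, $\Mnorm{D^k}{} \le C_D \rho^k$ for some constant $C_D<\infty$ and all $k\ge 0$; this geometric decay is the workhorse of every estimate below. The plan is first to write $\state{t}$ explicitly in terms of the noise history using the closed-loop dynamics, then decompose $V_n$ into three groups of terms (an initial-condition part, a cross part, and a driving-noise part), show that the first two are of smaller order than $n$, and finally identify the limit of the dominant term by an SLLN-plus-dominated-convergence argument.

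Unrolling the recursion gives $\state{t} = D^t\state{0} + \tilde x_t$ with $\tilde x_t := \sum_{s=1}^t D^{t-s}\noise{s}$. Substituting into $V_n$ produces four sums: $\sum_t D^t\state{0}\state{0}'(D')^t$, the two cross terms $\sum_t D^t\state{0}\tilde x_t'$ and its transpose, and the main term $\sum_t \tilde x_t\tilde x_t'$. The first is bounded in norm by $C_D^2\norm{\state{0}}{}^2\sum_{t\ge0}\rho^{2t}<\infty$, hence vanishes after division by $n$. For the cross terms, the Cauchy--Schwarz inequality together with the bound $\E{\norm{\tilde x_t}{}^2}\le C_D^2\tr{\covmat{}}/(1-\rho^2)$ forces $\sum_{t=0}^n\Mnorm{D^t\state{0}\tilde x_t'}{}=O(n^{1/2})$ almost surely, so this contribution is also negligible after normalization by $n$.

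The heart of the argument is the main term. Expanding $\tilde x_t\tilde x_t'$ and reordering summations yields
\begin{equation*}
\sum_{t=0}^{n}\tilde x_t\tilde x_t' \;=\; \underbrace{\sum_{k=0}^{n-1}D^k\Bigl(\sum_{s=1}^{n-k}\noise{s}\noise{s}'\Bigr)(D')^k}_{\text{diagonal }s=s'} \;+\; \underbrace{\sum_{k\neq k'}D^k\Bigl(\sum_s \noise{s}\noise{s+k-k'}'\Bigr)(D')^{k'}}_{\text{off-diagonal }s\neq s'}.
\end{equation*}
For the off-diagonal piece, each inner sum $\sum_s \noise{s}\noise{s+\ell}'$ with $\ell\neq 0$ is a martingale with respect to the natural filtration of the noise, and the $\alpha>4$ moment assumption on $\noise{t}$ supplies enough integrability for the Chow--Burkholder-type SLLN: $n^{-1}\sum_s\noise{s}\noise{s+\ell}'\to 0$ almost surely for each $\ell$. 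The geometric decay $\Mnorm{D^k}{}\le C_D\rho^k$ then lets us apply dominated convergence over the double index $(k,k')$ to conclude that the whole off-diagonal term, divided by $n$, tends to zero.

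For the diagonal piece, the classical SLLN for i.i.d.\ (or more generally independent, uniformly integrable) matrices gives $(n-k)^{-1}\sum_{s=1}^{n-k}\noise{s}\noise{s}'\to\covmat{}$ almost surely; combined with $(n-k)/n\to1$ for each fixed $k$, this yields pointwise convergence of the summand (in $k$) to $D^k\covmat{}(D')^k$. The dominating sequence $\Mnorm{D^k\covmat{}(D')^k}{}\le C_D^2\Mnorm{\covmat{}}{}\rho^{2k}$ is summable, so one more application of dominated convergence (on $\mathbb{N}$, along a probability-one event) exchanges the sum over $k$ with the limit in $n$, giving the stated identity $\lim_n n^{-1}V_n=\sum_{k=0}^\infty D^k\covmat{}(D')^k$.

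I expect the main obstacle to be making the two dominated-convergence exchanges fully rigorous on a single probability-one event: the SLLN provides almost-sure convergence separately for each lag $\ell$, and one must either pick a countable union of null sets or, preferably, produce a uniform bound $\sup_n n^{-1}\norm{\sum_{s=1}^n\noise{s}\noise{s+\ell}'}{}\le Z_\ell$ with $\sum_\ell\rho^\ell\,\E{Z_\ell}<\infty$, which is where the moment assumption $\alpha>4$ (giving second moments of $\noise{s}\noise{s+\ell}'$) is actually consumed. The remaining steps are routine.
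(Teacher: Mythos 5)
Your proposal is correct, but note that the paper does not actually prove Lemma~\ref{smallcov}: it imports the result from Lai and Wei's work on multivariate weighted sums in linear dynamic systems (the reference cited next to the lemma), so there is no in-paper argument to compare against line by line. Your self-contained derivation is the standard one for the ergodicity of a stable linear recursion: unroll $\state{t}=D^t\state{0}+\sum_{s=1}^{t}D^{t-s}\noise{s}$, discard the initial-condition and cross terms using the geometric decay $\Mnorm{D^k}{}\le C_D\rho^k$, split the dominant double sum into the lag-$0$ part (which converges to $\sum_{k\ge0}D^k\covmat{}(D')^k$) and the nonzero-lag parts (which are martingales and vanish after normalization), and close with a Tannery/dominated-convergence exchange. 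This is sound, and it has the advantage of making explicit exactly where the moment hypothesis enters: for the diagonal term one needs Kolmogorov's variance criterion for the independent but not identically distributed matrices $\noise{s}\noise{s}'$, which requires uniformly bounded fourth moments of $\noise{s}$ and is precisely what $\tailexp>4$ delivers; for the lags $\ell\neq0$, independence of the noise means only second moments are consumed. Two small points. First, the obstacle you flag at the end is easier than you suggest: the crude bound
\begin{equation*}
\sup_{n}\;n^{-1}\Bigl\Vert\sum_{s}\noise{s}\noise{s+\ell}'\Bigr\Vert\;\le\;\sup_{n}\;n^{-1}\sum_{s=1}^{n+|\ell|}\norm{\noise{s}}{}^{2}\;=:\;Z\;<\;\infty\quad\text{a.s.}
\end{equation*}
gives a dominating envelope $C_D^2\rho^{k+k'}Z$ that is independent of $\ell$ and summable over $(k,k')$, so a single probability-one event (the countable intersection of the lag-wise SLLN events with $\{Z<\infty\}$) suffices. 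Second, your cross-term estimate is phrased as if an expectation bound directly yields an almost-sure rate; the clean fix is to observe that $\sum_{t\ge0}\rho^{t}\norm{\tilde x_t}{}^{2}$ has finite expectation and is therefore almost surely finite, which makes the cross term $O(1)$ almost surely --- more than the $o(n)$ you need. Neither point is a gap in substance.
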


\begin{lem} \label{Lipschitzlemma}
	Letting $\ssconstant_P<\infty$, there is a constant $\ssconstant_G<\infty$ such that 
	\begin{equation*}
		\sup\limits_{ \para : \Mnorm{\Kmatrix{\para}}{2} \leq \ssconstant_P } \frac{\Mnorm{\Lmatrix{\para}-\Lmatrix{\para_0}}{}}{\Mnorm{\para-\para_0}{}} < \ssconstant_G.
	\end{equation*}
\end{lem}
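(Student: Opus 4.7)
The plan is to reduce the bound on $\Mnorm{\Lmatrix{\para} - \Lmatrix{\para_0}}{}$ to a bound on $\Mnorm{\Kmatrix{\para} - \Kmatrix{\para_0}}{2}$ through the explicit Riccati formula~\eqref{ricatti1}, and then to establish Lipschitz continuity of $\Kmatrix{\cdot}$ by perturbing the algebraic Riccati equation around $\para_0$.

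First, I would observe that on the sublevel set $\mathcal{S} = \{\para : \Mnorm{\Kmatrix{\para}}{2} \leq \ssconstant_P\}$, the positive definiteness of $Q_u$ ensures $B'\Kmatrix{\para}B + Q_u \succeq Q_u$, so its inverse has spectral norm at most $\lambda_{\min}(Q_u)^{-1}$. Combined with the uniform bound $\Mnorm{\Kmatrix{\para}}{2} \leq \ssconstant_P$, applying the product rule to~\eqref{ricatti1} yields
\begin{equation*}
\Mnorm{\Lmatrix{\para} - \Lmatrix{\para_0}}{} \leq C_1 \Mnorm{\para - \para_0}{} + C_2 \Mnorm{\Kmatrix{\para} - \Kmatrix{\para_0}}{2},
\end{equation*}
where $C_1, C_2$ depend only on $\ssconstant_P$, $\para_0$, $Q_x$, $Q_u$ (after using that $\Mnorm{A}{} \leq \Mnorm{\para_0}{} + \Mnorm{\para - \para_0}{}$ enters only locally). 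This reduces the problem to showing Lipschitz continuity of $\Kmatrix{\cdot}$ on $\mathcal{S}$.

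Next, for the Lipschitz continuity of $\Kmatrix{\cdot}$, I would rewrite the Riccati equation in closed-loop form, namely
\begin{equation*}
\Kmatrix{\para} = Q_x + \Lmatrix{\para}' Q_u \Lmatrix{\para} + (A + B\Lmatrix{\para})' \Kmatrix{\para} (A + B\Lmatrix{\para}),
\end{equation*}
write the analogous identity for $\para_0$, and subtract. Using the Riccati optimality identity (which makes the first-order terms in $\Lmatrix{\para} - \Lmatrix{\para_0}$ cancel), I obtain a discrete Lyapunov equation for $\Delta := \Kmatrix{\para} - \Kmatrix{\para_0}$ of the form $\Delta = F' \Delta F + E$, where $F = A_0 + B_0 \Lmatrix{\para_0}$ is the stable closed-loop matrix (with $\eigmax{F} < 1$ by the optimality of $\optimalpolicy$), and the residual $E$ is bounded by a constant multiple of $\Mnorm{\para - \para_0}{}$ times quantities uniformly bounded on $\mathcal{S}$. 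Since the Lyapunov operator $X \mapsto X - F' X F$ is invertible with bounded inverse (its inverse is $\sum_{k=0}^\infty F'^k \cdot F^k$, convergent by stability of $F$), I obtain $\Mnorm{\Delta}{2} \leq C_3 \Mnorm{\para - \para_0}{}$ in a neighborhood of $\para_0$.

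Finally, I would extend to all of $\mathcal{S}$ by a truncation argument: outside any fixed neighborhood of $\para_0$, the numerator $\Mnorm{\Lmatrix{\para} - \Lmatrix{\para_0}}{}$ is bounded by the uniform bound on $\Mnorm{\Lmatrix{\para}}{}$ from Step~1, while the denominator $\Mnorm{\para - \para_0}{}$ is bounded away from zero, so the ratio is trivially bounded; inside the neighborhood, the local Lipschitz bound applies, and the maximum of the two constants gives $\ssconstant_G$. The main obstacle is the algebraic manipulation that produces the clean Lyapunov equation for $\Delta$: one must carefully separate the parameter perturbation $\para - \para_0$ from the Riccati perturbation $\Delta$ (which also depends on $\para$ through $\Lmatrix{\para}$) without introducing cross terms that destroy the contraction structure, and then absorb the implicit $\Delta$-dependence of $\Lmatrix{\para}$ into the left-hand side for small enough $\Mnorm{\para - \para_0}{}$. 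This is the standard Riccati sensitivity analysis underlying the cited reference.
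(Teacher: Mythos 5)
Note first that the paper does not actually contain a proof of Lemma~\ref{Lipschitzlemma}: it is listed among the auxiliary results and imported wholesale from \cite{faradonbeh2017finite}, so there is no in-paper argument to compare against. Your sketch is the standard Riccati sensitivity analysis that the cited reference relies on, and its overall structure is sound: (i) on the sublevel set, the gain formula \eqref{ricatti1} together with $B'\Kmatrix{\para}B+Q_u \succeq Q_u$ reduces Lipschitz continuity of $\Lmatrix{\cdot}$ to that of $\Kmatrix{\cdot}$; (ii) the completion-of-squares (min-over-gains) form of the Riccati operator turns $\Delta=\Kmatrix{\para}-\Kmatrix{\para_0}$ into a pair of one-sided Lyapunov inequalities with a driving term of size $O(\Mnorm{\para-\para_0}{})$; (iii) away from $\para_0$ the ratio is trivially bounded because $\Mnorm{\Lmatrix{\para}}{2}^2\le \ssconstant_P/\lambda_{\min}(Q_u)$ on the sublevel set, so only the local statement matters. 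The one place the sketch is thin is the stability needed to invert the Lyapunov operator in step (ii): the two one-sided bounds are driven by $A+B\Lmatrix{\para_0}$ and $A_0+B_0\Lmatrix{\para}$ respectively, and the latter involves $\Lmatrix{\para}$, whose proximity to $\Lmatrix{\para_0}$ is exactly the quantity being controlled, so a naive argument is circular. The fix is that the sublevel-set constraint itself supplies the certificate: $\Kmatrix{\para}\succeq Q_x + (A+B\Lmatrix{\para})'\Kmatrix{\para}(A+B\Lmatrix{\para})$ with $Q_x\succ 0$ and $\Mnorm{\Kmatrix{\para}}{2}\le\ssconstant_P$ makes $A+B\Lmatrix{\para}$ uniformly exponentially stable over the whole sublevel set (rate and condition number controlled by $\ssconstant_P/\lambda_{\min}(Q_x)$), and since $\Lmatrix{\para}$ is uniformly bounded there, $A_0+B_0\Lmatrix{\para}$ is an $O(\Mnorm{\para-\para_0}{})$ perturbation of it and hence uniformly stable for $\para$ in a small enough neighborhood of $\para_0$. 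With that observation supplied, your argument goes through and matches the route the paper implicitly relies on.
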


\begin{lem} \label{bootcov}
	Let $\Eboot{n}$ be the expectation with respect to $\Pboot{n}$; the empirical probability measure of the residuals defined in \eqref{empiricaldist}. Then, for the Bootstrap covariance matrix $\empiricalcovmat{n} = \Eboot{n} \left[\bootnoise{t} \bootnoise{t}'\right]$ we have
	\begin{equation} \label{PDBootCov}
	0 < \liminf\limits_{n \to \infty} \eigmin{\empiricalcovmat{n}} \leq \limsup\limits_{n \to \infty} \eigmax{\empiricalcovmat{n}}<\infty.
	\end{equation}
\end{lem}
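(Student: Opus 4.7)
The plan is to show that the bootstrap covariance $\empiricalcovmat{n}$ converges almost surely to the true noise covariance $\covmat{}$; since $\covmat{}$ is positive definite with bounded eigenvalues by assumption, the asserted $\liminf$ and $\limsup$ bounds in \eqref{PDBootCov} follow immediately from continuity of the spectrum. To this end, I would first use \eqref{residuals} and \eqref{systemeq1} to decompose each residual as $\residunoise{t+1}=\noise{t+1}+\delta_t$, where $\delta_t\equiv\left(\para_0-\estpara{n}\right)\extendedL{t}\state{t}$. Substituting this decomposition into $\avenoise{n}$ and into the identity $\empiricalcovmat{n}=n^{-1}\sum_{t=1}^{n}\residunoise{t}\residunoise{t}'-\avenoise{n}\avenoise{n}'$ splits $\empiricalcovmat{n}$ into a noise sample covariance, plus cross terms of the form $n^{-1}\sum\noise{t+1}\delta_t'$, plus a quadratic correction $n^{-1}\sum\delta_t\delta_t'$, minus the outer product $\avenoise{n}\avenoise{n}'$.

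For the noise sample covariance I would invoke the Marcinkiewicz--Zygmund SLLN entrywise; this is applicable because $\left\{\noise{t}\right\}$ is independent and zero mean with uniformly bounded $\tailexp$-th moments for some $\tailexp>4$, and it yields $n^{-1}\sum\noise{t+1}\noise{t+1}'\to\covmat{}$ and $n^{-1}\sum\noise{t+1}\to 0$ almost surely. For the quadratic correction, I would rewrite
$$\frac{1}{n}\sum_{t=0}^{n-1}\delta_t\delta_t'=\frac{1}{n}\left(\para_0-\estpara{n}\right)U_n\left(\para_0-\estpara{n}\right)'=\frac{1}{n}W_nW_n'$$
in the notation of Lemma \ref{Lailemma}, which gives $\eigmax{W_nW_n'}=O(\log\eigmax{U_n})$. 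Combining Lemma \ref{smallcov}, applied within each episode of Algorithm \ref{adaptivealgo1} where $\extendedL{t}$ is constant and stabilizing for all sufficiently large $m$, with the uniform boundedness of $\extendedLmatrix{\boot{\para}_t}$ inherited from Lemma \ref{Lipschitzlemma}, would then yield $\eigmax{U_n}=O(n)$, so the quadratic term is $O(n^{-1}\log n)=o(1)$. The remaining cross terms and the $\avenoise{n}\avenoise{n}'$ correction would be dispatched by Cauchy--Schwarz, using this $o(1)$ bound together with $n^{-1}\sum\norm{\noise{t+1}}{}^2=O(1)$ from the noise sample covariance.

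Putting these three estimates together gives $\empiricalcovmat{n}\to\covmat{}$ almost surely, from which \eqref{PDBootCov} follows. The main obstacle, in my view, is justifying the uniform stabilization hypothesis needed for Lemma \ref{smallcov}: the matrices $\extendedL{t}$ used inside Algorithm \ref{adaptivealgo1} are piecewise constant functions of the bootstrapped parameters $\boot{\para}_t$, and one must show that from some episode onward they are both bounded in operator norm and satisfy $\eigmax{\para_0\extendedL{t}}<1$. This property is itself part of the inductive argument used in the proof of Theorem \ref{asympTheorem}, and once in force it transfers to the data governing $\empiricalcovmat{n}$.
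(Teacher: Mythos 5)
Your proposal is correct and follows essentially the same route as the paper's proof: the same residual decomposition $\residunoise{t+1}=\noise{t+1}+\left(\para_0-\estpara{n}\right)\extendedL{t}\state{t}$, the reduction of the quadratic correction to $n^{-1}W_nW_n'$ and its vanishing via Lemma \ref{Lailemma}, and the Cauchy--Schwarz bound on $\avenoise{n}$. The only cosmetic differences are that the paper uses the normal equation to collapse the cross terms exactly into $-2W_nW_n'$, yielding the identity $\empiricalcovmat{n}=n^{-1}\sum_{t=1}^{n}\noise{t}\noise{t}'-n^{-1}W_nW_n'-\avenoise{n}\,\avenoise{n}'$ rather than bounding them separately, and that the paper (like you) defers the stabilization hypothesis to the standing assumption made at the outset of the proof of Theorem \ref{asympTheorem}.
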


\begin{proof}
	First, the definition of $\left\{ \residunoise{t} \right\}_{t=1}^n$ in \eqref{residuals}, in addition to the dynamics \eqref{systemeq1} yield $\residunoise{t+1}= \left(\para_0-\estpara{n}\right) \extendedL{t} \state{t} + \noise{t+1}$. Defining $U_n,W_n$ similar to Lemma~\ref{Lailemma}, the normal equation $\estpara{n}U_n = \sum\limits_{t=0}^{n-1} \state{t+1} \state{t}' \extendedL{t}'$ implies that 
	\begin{eqnarray*}
		\sum\limits_{t=0}^{n-1} \left[\noise{t+1} \state{t}' \extendedL{t}'\left(\para_0-\estpara{n}\right)' + \left(\para_0-\estpara{n}\right) \extendedL{t} \state{t} \noise{t+1}'\right]=-2W_nW_n'.
	\end{eqnarray*}
	So, we obtain
	\begin{eqnarray*}
		 \empiricalcovmat{n} &=& \frac{1}{n} \sum\limits_{t=0}^{n-1} \residunoise{t+1} \residunoise{t+1}' - \avenoise{n} \: \avenoise{n}' = \frac{1}{n} \sum\limits_{t=1}^{n} \noise{t} \noise{t}' - \frac{1}{n} W_n W_n' - \avenoise{n} \: \avenoise{n}',
	\end{eqnarray*}
	since $\left(\para_0-\estpara{n}\right) U_n \left(\para_0-\estpara{n}\right)'= W_n W_n'$. Thus, applying the Law of Large Numbers to the matrices $\noise{t}\noise{t}'$, we get $\limsup\limits_{n \to \infty} \eigmax{\empiricalcovmat{n}} \leq \eigmax{\covmat{}}$. Further, by the Law of Large Numbers, $n^{-1} \sum\limits_{t=1}^{n} \noise{t}$ vanishes as $n$ grows. Therefore, the definitions of $U_n,W_n$ lead to
	\begin{eqnarray*}
		\limsup\limits_{n \to \infty} \norm{\avenoise{n}}{2} &\leq& \limsup\limits_{n \to \infty} \frac{1}{n} \sum\limits_{t=0}^{n-1} \norm{ \left(\para_0-\estpara{n}\right) \extendedL{t} \state{t}}{2} + \norm{\frac{1}{n} \sum\limits_{t=0}^{n-1} \noise{t+1}}{2} \\ 
		&\leq& \limsup\limits_{n \to \infty} \left(\frac{1}{n} \sum\limits_{t=0}^{n-1} \norm{\left(\para_0-\estpara{n}\right) \extendedL{t} \state{t}}{2}^2\right)^{1/2} \\
		&\leq& \limsup\limits_{n \to \infty} \tr{\frac{1}{n} \sum\limits_{t=0}^{n-1} \left(\para_0-\estpara{n}\right) \extendedL{t} \state{t} \state{t}' \extendedL{t}' \left(\para_0-\estpara{n}\right)'}^{1/2} \\
		&\leq& \limsup\limits_{n \to \infty} \left(\frac{p}{n} \eigmax{W_n W_n'}\right)^{1/2}.
	\end{eqnarray*}
	Finally, since Lemma \ref{Lailemma} implies that $\limsup\limits_{n \to \infty} n^{-1} \eigmax{W_n W_n'}=0$, we get the desired result on the smallest eigenvalue: $\liminf\limits_{n \to \infty} \eigmin{\empiricalcovmat{n}} \geq \eigmin{\covmat{}}$. 
\end{proof}

\section{Proof of Theorem \ref{asympTheorem}} \label{ProofApp}
	The following analysis rigorously studies the behavior of both $\regret{n}{\policy}$ and $\Mnorm{\estpara{n}-\para_0}{2}$ as the time of interacting with the system, $n$, grows. In the sequel, we assume that the system is stable. The stabilization problem has been addressed previously in the literature ~\cite{faradonbeh2018stabilization}. In fact, an ad-hoc algorithm for stabilizing the system is presented and analyzed in the work of Faradonbeh et al.~\cite{faradonbeh2018stabilization}. It establishes high probability guarantees for stabilization in finite time, using random feedback gains. The proposed method can be executed a priori, and terminates in a relatively short time period. Moreover, the random feedback framework presented for stabilization algorithms can also be implemented with the bootstrap method of Algorithm \ref{bootalgo}. Therefore, the subsequent theoretical analysis focuses on establishing regret bounds and learning accuracy of Algorithm~\ref{adaptivealgo1} after the transient stabilization period.  
	
	In the reinforcement learning policy provided by Algorithm \ref{adaptivealgo1}, the residual bootstrap procedure of Algorithm \ref{bootalgo} is being called at the end of every episode. Fixing $i$, for $j<i$ define the following quantities when the algorithm $\text{BOOTSTRAP}$ is called at time $t = \lceil \rrate^i \rceil$:
	\begin{eqnarray*}
	\boot{W}_{i} &=& \sum\limits_{t=0}^{\lceil \rrate^{i} \rceil-1} \bootnoise{t+1} \bootstate{t}' \extendedLmatrix{\boot{\para}_t}' \boot{U}_i^{-1/2} \in \R^{p \times q}, \\
	\boot{V}_{ij} &=& \sum\limits_{t=\lceil \rrate^{j} \rceil}^{\lceil \rrate^{j+1} \rceil -1} \bootstate{t} \bootstate{t}' \in \R^{p \times p}, \\
	\randommatrix_{ij} &=& \sum\limits_{t=\lceil \rrate^{j} \rceil }^{\lceil \rrate^{j+1} \rceil-1} \bootnoise{t+1} \bootstate{t}' \in \R^{p \times p} ,\\
	\boot{U}_{i} &=& \sum\limits_{t=0}^{\lceil \rrate^{i} \rceil-1} \extendedLmatrix{\boot{\para}_t} \bootstate{t} \bootstate{t}' \extendedLmatrix{\boot{\para}_t}' \\
	&=& \sum\limits_{j=0}^{i-1} \extendedLmatrix{\boot{\para}_{\lceil \rrate^{j} \rceil}} \boot{V}_{ij} \extendedLmatrix{\boot{\para}_{\lceil \rrate^{j} \rceil}}'  \in \R^{ q \times q}.
	\end{eqnarray*}
	Note that the notation is slightly overloaded in the above expressions since in every call, $\text{BOOTSTRAP}$ generates a completely new set of surrogate noise and state vectors $\bootnoise{\cdot},\bootstate{\cdot}$. Similarly, for the original system of dynamics parameter $\para_0$ define the matrices
	\begin{eqnarray*}
	W_i &=& \sum\limits_{t=0}^{\lceil \rrate^i \rceil -1} \noise{t+1} \state{t}' \extendedLmatrix{\boot{\para}_t}' U_i^{-1/2} \in \R^{p \times q}, \\
	{V}_{j} &=& \sum\limits_{t=\lceil \rrate^{j} \rceil}^{\lceil \rrate^{j+1} \rceil -1} \state{t} \state{t}' \in \R^{p \times p},\\
	U_i &=& \sum\limits_{j=0}^{i-1} \extendedLmatrix{\boot{\para}_{\lceil \rrate^{j} \rceil}} {V}_{j} \extendedLmatrix{\boot{\para}_{\lceil \rrate^{j} \rceil}}'  \in \R^{ q \times q}. \label{grammatrices}
	\end{eqnarray*}
	
Using the martingale Central Limit Theorem, $\rrate^{-i/2} \randommatrix_i$ converges in distribution to a Gaussian random matrix. Moreover, for an arbitrary $j < i$, let $\mathcal{F}_j$ be the sigma-field generated by $\left\{ \noise{t} \right\}_{t=1}^{\lceil \rrate^j \rceil}$ and $\left\{ \left\{ \bootnoise{t} \right\}_{t=1}^{\lceil \rrate^\ell \rceil} \right\}_{\ell=1}^j$. Then, for the $\mathcal{F}_j$-measurable $v \in \R^{p}$, Lemma~\ref{smallcov} and Lemma~\ref{bootcov} imply that as $k$ grows,
\begin{eqnarray} \label{MCLT}
	\rrate^{-k/2} \randommatrix_{ik}v \Rightarrow \mathcal{N} \left( 0, \Sigma_v \right),
\end{eqnarray}
where ``$\Rightarrow$" denotes the convergence in distribution, and $\inf\limits_{\norm{v}{2}=1} \eigmin{\Sigma_v}>0$. 
	Next, we study eigenvalues of the matrix $\boot{U}_i$. According to Lemma~\ref{smallcov} and Lemma~\ref{bootcov}, we have:
	\begin{eqnarray} \label{sufficientmineig}
	\liminf\limits_{i \to \infty} \frac{\eigmin{\boot{U}_i}}{\inf\limits_{ \norm{v}{2}=1}  \sum\limits_{j=0}^{i-1} \rrate^{j} \norm{\extendedLmatrix{\boot{\para}_{\lceil \rrate^{j} \rceil}}'v}{2}^2}  > 0, \:\:\:\:\:\: \limsup\limits_{i \to \infty} \frac{\eigmax{\boot{U}_i}}{\sup\limits_{ \norm{v}{2}=1}  \sum\limits_{j=0}^{i-1} \rrate^{j} \norm{\extendedLmatrix{\boot{\para}_{\lceil \rrate^{j} \rceil}}'v}{2}^2} < \infty.
	\end{eqnarray}
	In order to establish a lower bound for the smallest eigenvalue of $\boot{U}_i$, compute ${\boot{U}_{i}}^{-1}$ according to the blocks of 
	\begin{eqnarray*}
		\boot{U}_{i} = \sum\limits_{j=0}^{i-1} \begin{bmatrix}
			 \boot{V}_{ij} & \boot{V}_{ij} \Lmatrix{\boot{\para}_{\lceil \rrate^{j} \rceil}}' \\
			\Lmatrix{\boot{\para}_{\lceil \rrate^{j} \rceil}} \boot{V}_{ij} & \Lmatrix{\boot{\para}_{\lceil \rrate^{j} \rceil}} \boot{V}_{ij} \Lmatrix{\boot{\para}_{\lceil \rrate^{j} \rceil}}'
		\end{bmatrix}.
	\end{eqnarray*}
	So, 
	\begin{eqnarray*}
		{\boot{U}_{i}}^{-1}= \begin{bmatrix}
			 \left( \sum\limits_{j=0}^{i-1} \boot{V}_{ij} \right)^{-1} + X_{12}(i) {X_{22}(i)}^{-1} {X_{12}(i)}'& -X_{12}(i){X_{22}(i)}^{-1} \\
			-{X_{22}(i)}^{-1} {X_{12}(i)}' & {X_{22}(i)}^{-1}
		\end{bmatrix},
	\end{eqnarray*}
	where 
	\begin{eqnarray}
		X_{12}(i) &=& \left( \sum\limits_{j=0}^{i-1} \boot{V}_{ij} \right)^{-1} \sum\limits_{j=0}^{i-1} \boot{V}_{ij} \Lmatrix{\boot{\para}_{\lceil \rrate^j \rceil}}', \notag \\
		X_{22}(i) &=& \sum\limits_{j=0}^{i-1} \Lmatrix{\boot{\para}_{\lceil \rrate^j \rceil}} \boot{V}_{ij} \Lmatrix{\boot{\para}_{\lceil \rrate^j \rceil}}' - \left( \sum\limits_{j=0}^{i-1} \Lmatrix{\boot{\para}_{\lceil \rrate^j \rceil}} \boot{V}_{ij}\right) \left( \sum\limits_{j=0}^{i-1} \boot{V}_{ij} \right)^{-1} \sum\limits_{j=0}^{i-1} \boot{V}_{ij} \Lmatrix{\boot{\para}_{\lceil \rrate^j \rceil}}' \notag \\
		&=& \sum\limits_{j=0}^{i-1} \left[\Lmatrix{\boot{\para}_{\lceil \rrate^j \rceil}}' - X_{12}(i)\right]' \boot{V}_{ij} \left[\Lmatrix{\boot{\para}_{\lceil \rrate^j \rceil}}' - X_{12}(i)\right] . \label{X22vsX12}
	\end{eqnarray}
Clearly, $\eigmin{X_{22}(i)} \geq \eigmin{\boot{U}_i}$. Further, let $v = \left[v_1',v_2'\right]' \in \R^q$ be arbitrary, where $v_1 \in \R^p$, $v_2 \in \R^r$. Then, we have
\begin{eqnarray*}
		v' \boot{U}_i v &=& \sum\limits_{j=0}^{i-1} 
		\left[ v_1 + \Lmatrix{\boot{\para}_{\lceil \rrate^{j} \rceil}}'v_2 \right]' \boot{V}_{ij} \left[ v_1 + \Lmatrix{\boot{\para}_{\lceil \rrate^{j} \rceil}}'v_2 \right]  \notag \\
		&=& v_2' X_{22}(i) v_2 + \sum\limits_{j=0}^{i-1} 
		\left[ v_1 + X_{12}(i) v_2 \right]' \boot{V}_{ij} \left[ v_1 + X_{12}(i)v_2 \right].
\end{eqnarray*}
So, Lemma~\ref{smallcov} and Lemma~\ref{bootcov} lead to
\begin{eqnarray} \label{blockeigmineq}
	\liminf\limits_{i \to \infty} \inf\limits_{ \norm{v}{}=1} \frac{v' \boot{U}_i v}{v_2' X_{22}(i) v_2 + \rrate^i \norm{v_1 + X_{12}(i)v_2}{2}^2 } > 0.
\end{eqnarray}

On the other hand, the least squares estimates in \eqref{LSE}, \eqref{BootLSE} lead to 
\begin{eqnarray*}
	\left( \estpara{\lceil \rrate^{i} \rceil} - \para_0 \right) U_{i} \left( \estpara{\lceil \rrate^{i} \rceil} - \para_0 \right)' &=& W_{i} W_{i}' , \\
	\left( \boot{\para}_{\lceil \rrate^{i} \rceil} - \estpara{\lceil \rrate^{i} \rceil} \right) \boot{U}_{i} \left( \boot{\para}_{\lceil \rrate^{i} \rceil} - \estpara{\lceil \rrate^{i} \rceil} \right)' &=& \boot{W}_{i} \boot{W}_{i}'.
\end{eqnarray*}

Therefore, applying Lemma~\ref{Lailemma}, Lemma~\ref{smallcov}, and Lemma~\ref{bootcov}, we obtain
\begin{eqnarray*}
	\limsup\limits_{j < i; \:\: j \to \infty} i^{-1/2} \rrate^{j/2} \Mnorm{\left( \estpara{\lceil \rrate^{i} \rceil} - \para_0 \right) \extendedLmatrix{\boot{\para}_{\lceil \rrate^{j} \rceil}}}{2}  &<& \infty, \\
	\limsup\limits_{j < i; \:\: j \to \infty} i^{-1/2} \rrate^{j/2} \Mnorm{ \left( \boot{\para}_{\lceil \rrate^{i} \rceil} - \estpara{\lceil \rrate^{i} \rceil} \right) \extendedLmatrix{\boot{\para}_{\lceil \rrate^{j} \rceil}}}{2} &<& \infty.
\end{eqnarray*}
Combine the above two inequalities to get
\begin{eqnarray} \label{closed-loop-id}
	\limsup\limits_{j < i; \:\: j \to \infty} i^{-1/2} \rrate^{j/2} \Mnorm{ \left( \boot{\para}_{\lceil \rrate^{i} \rceil} - \para_0 \right) \extendedLmatrix{\boot{\para}_{\lceil \rrate^{j} \rceil}}}{2} &<& \infty.
\end{eqnarray}

Now, we compare $\Lmatrix{\boot{\para}_{\lceil \rrate^{i} \rceil}}$, $\Lmatrix{\boot{\para}_{\lceil \rrate^{j} \rceil}}$ based on the local specification of the optimality manifolds in Lemma~\ref{optmanifold}. To do so, we also need to characterize the right-hand-side $p \times r$ sub-matrix of the randomization matrix $\boot{\para}_{\lceil \rrate^{i} \rceil} - \estpara{\lceil \rrate^{i} \rceil}$ provided by the residual bootstrap procedure. Therefore, multiplying both sides of the normal equation $\boot{\para}_{\lceil \rrate^{i} \rceil} - \estpara{\lceil \rrate^{i} \rceil} = \boot{W}_i \boot{U}_{i}^{-1/2}$ by $\left[ 0_{r \times p}, I_r \right]'$, and using $\extendedLmatrix{\boot{\para}_{\lceil \rrate^{j} \rceil}}' \boot{U}_i^{-1} \left[ 0_{r \times p}, I_r \right]' =\left( -X_{12}(i) + \Lmatrix{\boot{\para}_{\lceil \rrate^{j} \rceil}}'\right) {X_{22}(i)}^{-1}$, we obtain
\begin{eqnarray*}
	\left(\boot{\para}_{\lceil \rrate^{i} \rceil} - \estpara{\lceil \rrate^{i} \rceil}\right) \left[ 0_{r \times p}, I_r \right]' &=& \boot{W}_i \boot{U}_{i}^{-1/2} \left[ 0_{r \times p}, I_r \right]' = \sum\limits_{t=0}^{\lceil \rrate^{i} \rceil-1} \bootnoise{t+1} \bootstate{t}' \extendedLmatrix{\boot{\para}_t}' \boot{U}_i^{-1} \left[ 0_{r \times p}, I_r \right]' \\
	&=& \sum\limits_{j=0}^{i-1} \randommatrix_{ij} \extendedLmatrix{\boot{\para}_{\lceil \rrate^{j} \rceil}}' \left(\sum\limits_{j=0}^{i-1} \extendedLmatrix{\boot{\para}_{\lceil \rrate^{j} \rceil}} \boot{V}_{ij} \extendedLmatrix{\boot{\para}_{\lceil \rrate^{j} \rceil}}'\right)^{-1} \left[ 0_{r \times p}, I_r \right]' \\
	&=& \sum\limits_{j=0}^{i-1} \randommatrix_{ij} \left( -X_{12}(i) + \Lmatrix{\boot{\para}_{\lceil \rrate^{j} \rceil}}'\right) {X_{22}(i)}^{-1}\\
	&=& \bigrandommat_i {X_{22}(i)}^{-1/2},
\end{eqnarray*} 
where $\bigrandommat_i=\sum\limits_{j=0}^{i-1} \randommatrix_{ij} \left( \Lmatrix{\boot{\para}_{\lceil \rrate^{j} \rceil}}'-X_{12}(i) \right) {X_{22}(i)}^{-1/2}$. Letting $v \in \R^q$ be an eigenvector of $\boot{U}_i$, decompose it to $v=\left[v_1',v_2'\right]'$, where $v_1 \in \R^p$, $v_2 \in \R^r$. Then, in order to consider $v' \boot{U}_{i+1} v$, \eqref{sufficientmineig} indicates that we need to investigate
\begin{eqnarray*}
	\extendedLmatrix{\boot{\para}_{\lceil \rrate^{i} \rceil}}v - \extendedLmatrix{\boot{\para}_{\lceil \rrate^{j} \rceil}}v = \Lmatrix{\boot{\para}_{\lceil \rrate^{i} \rceil}}v_2 - \Lmatrix{\boot{\para}_{\lceil \rrate^{j} \rceil}}v_2,
\end{eqnarray*}
for $j < i$. For this purpose, plug in $\left(\boot{\para}_{\lceil \rrate^{i} \rceil} - \estpara{\lceil \rrate^{i} \rceil}\right) \left[ 0_{r \times p}, I_r \right]'=\bigrandommat_i {X_{22}(i)}^{-1/2}$, as well as \eqref{closed-loop-id}, in Lemma~\ref{optmanifold}. Using the notation of Lemma~\ref{optmanifold}, ${\boot{\para}_{\lceil \rrate^{j} \rceil}}\extendedLmatrix{\boot{\para}_{\lceil \rrate^{j} \rceil}}$ is stable, and for the matrix $M+N \Lmatrix{\boot{\para}_{\lceil \rrate^{j} \rceil}}$, \eqref{closed-loop-id} implies that $\limsup\limits_{j < i; \:\: j \to \infty} i^{-1/2} \rrate^{j/2} \Mnorm{ \left( \boot{\para}_{\lceil \rrate^{i} \rceil} - \boot{\para}_{\lceil \rrate^{j} \rceil} \right) \extendedLmatrix{\boot{\para}_{\lceil \rrate^{j} \rceil}} }{2} < \infty$. Hence
\begin{eqnarray*}
\liminf\limits_{i>j; j \to \infty} \frac{ \norm{\left(\Lmatrix{\boot{\para}_{\lceil \rrate^{i} \rceil}} - \Lmatrix{\boot{\para}_{\lceil \rrate^{j} \rceil}}\right)v_2 }{}}{ \norm{ \bigrandommat_i {X_{22}(i)}^{-1/2}v_2 }{} - i^{1/2} \rrate^{-j/2} \norm{v_2}{2} } > 0.
\end{eqnarray*}

By leveraging \eqref{MCLT}, \eqref{sufficientmineig}, and \eqref{X22vsX12}, we get
\begin{eqnarray} \label{mindiversity}
\liminf\limits_{i \to \infty} \frac{ i^{1/2} \rrate^{-i/2} \eigmin{\boot{U}_{i+1}}^{1/2} }{ \eigmin{X_{22}(i)}^{-1/2} - i \rrate^{-i/2} } > 0.
\end{eqnarray}

Note that since the system is stabilized, $\eigmin{Q_u}>0$ implies that $\sup\limits_{1 \leq k } \Mnorm{\Lmatrix{\boot{\para}_{\lceil \rrate^{k} \rceil}}}{2}<\infty$. Further, according to Lemma~\ref{smallcov} we have $\limsup\limits_{i \to \infty} \Mnorm{X_{12}(i)}{2}<\infty$. Thus, using \eqref{blockeigmineq}, as long as $\limsup\limits_{i \to \infty} i\rrate^{-i} \eigmin{\boot{U}_i}=0$, we have 
\begin{eqnarray*}
	0 < \liminf\limits_{i \to \infty} \frac{\eigmin{\boot{U}_{i}} }{ \eigmin{X_{22}(i)} } \leq \limsup\limits_{i \to \infty} \frac{\eigmin{\boot{U}_{i}} }{ \eigmin{X_{22}(i)} } \leq 1.
\end{eqnarray*}
Therefore, \eqref{mindiversity} yields
\begin{eqnarray} \label{booteigmin}
0< \liminf\limits_{i \to \infty} i^{1/2} \rrate^{-i/2} \eigmin{\boot{U}_{i}} , \:\:\:\:\:\: \limsup\limits_{i \to \infty} \rrate^{-i} \eigmax{\boot{U}_{i}} < \infty.
\end{eqnarray}
In addition, since the extended feedback matrices $\extendedLmatrix{\boot{\para}_{\lceil \rrate^{j} \rceil}}$ in the definitions of $U_i$ and $\boot{U}_i$ are the same, we get a similar result for the actual system:
\begin{eqnarray} \label{eigmin}
0<\liminf\limits_{i \to \infty} i^{1/2} \rrate^{-i/2} \eigmin{{U}_{i}} , \:\:\:\:\:\: \limsup\limits_{i \to \infty} \rrate^{-i} \eigmax{{U}_{i}} < \infty.
\end{eqnarray}
Next, applying Lemma~\ref{Lailemma} to the least squares estimates in \eqref{LSE}, \eqref{BootLSE}, and using \eqref{booteigmin}, \eqref{eigmin}, we obtain
\begin{eqnarray*}
	\limsup\limits_{i \to \infty} i^{-1} \rrate^{i/4} \Mnorm{ \estpara{\lceil \rrate^{i} \rceil} - \para_0 }{2}  &<& \infty, \\
	\limsup\limits_{i \to \infty} i^{-1} \rrate^{i/4} \Mnorm{ \boot{\para}_{\lceil \rrate^{i} \rceil} - \estpara{\lceil \rrate^{i} \rceil} }{2} &<& \infty.
\end{eqnarray*}
Combining the above two inequalities, we get the desired result regarding the learning accuracy. Further, Lemma~\ref{Lipschitzlemma} implies that
\begin{eqnarray*}
	\limsup\limits_{i \to \infty} i^{-2} \rrate^{i/2} \Mnorm{ \Lmatrix{\boot{\para}_{\lceil \rrate^{i} \rceil}} - \Lmatrix{\para_0} }{2}^2 &<& \infty.
\end{eqnarray*}
So, according to Lemma~\ref{smallcov} we get
\begin{eqnarray} \label{regrethistory}
	\limsup\limits_{i \to \infty} i^{-2}\rrate^{-i/2} \sum\limits_{t=0}^{\lceil \rrate^{i} \rceil-1} \norm{ \left( \Lmatrix{\boot{\para}_t} - \Lmatrix{\para_0} \right) \state{t} }{}^2 < \infty .
\end{eqnarray}

On the other hand, the moment condition $\sup\limits_{t \geq 1} \E{\norm{\noise{t}}{2}^{\tailexp}}<\infty$ implies that for all $\tailcoeff> 1/\tailexp$, $\epsilon>0$, 
\begin{eqnarray*}
	\sum\limits_{t=1}^{\infty} \PP{ \norm{\noise{t}}{2} > \epsilon t^\tailcoeff } \leq \sum\limits_{t=1}^{\infty} \epsilon^{-\tailexp} t^{-\tailexp \tailcoeff} \E{\norm{\noise{t}}{2}^\tailexp} < \infty, 
\end{eqnarray*}
which by the Borel-Cantelli Lemma leads to $\limsup\limits_{t \to \infty} t^{-1/4} \norm{\noise{t}}{2}=0$, since $\tailexp>4$. By stability of the closed-loop matrix $A_0+B_0\Lmatrix{\para_0}$, it holds that
\begin{eqnarray} \label{regretfluct}
	\limsup\limits_{n \to \infty} n^{-1/2}{\norm{\sum\limits_{t=1}^n \left(A_0+B_0\Lmatrix{\para_0}\right)^{n-t}\noise{t}}{2}^2} \leq \limsup\limits_{n \to \infty} \left( \sum\limits_{t=1}^n \Mnorm{ \left(A_0+B_0\Lmatrix{\para_0}\right)^{n-t}}{2} t^{-1/4} \norm{\noise{t}}{2} \right)^2  = 0.
\end{eqnarray}
Putting \eqref{regrethistory}, \eqref{regretfluct}, and Lemma~\ref{generalregret} together, the desired result on the growth rate of the regret is established, which finishes the proof.


\else
\bibliographystyle{IEEEtran}
\bibliography{References}          
\fi 
\end{document}